\newcommand{\R}{\mathbb{R}}
\newcommand{\cV}{{\mathcal{V}}}
\newtheorem{assumption}{Assumption}
\begin{document}
\mainmatter              
\title{Geometric Foundations of Tuning without Forgetting in Neural ODEs}
\titlerunning{Geometric Foundations of Tuning without Forgetting in Neural ODEs}  
%
\author{Erkan Bayram\inst{1} \and
Mohamed-Ali Belabbas\inst{1} \and Tamer Başar\inst{1}}
\authorrunning{Bayram et al.} 
%
\tocauthor{Erkan Bayram, Mohamed-Ali Belabbas, Tamer Başar}
\institute{Coordinated Science Laboratory, University of Illinois Urbana-Champaign\\ Urbana, IL 61801\\
\email{(ebayram2,belabbas,basar1)@illinois.edu}
\thanks{Research of UIUC authors was supported in part by the ARO Grant W911NF-24-1-0085, NSF-CCF 2106358, ARO W911NF-24-1-0105 and AFOSR FA9550-20-1-0333.}
}
\maketitle              

\begin{abstract}

In our earlier work, we introduced the principle of Tuning without Forgetting (TwF) for sequential training of neural ODEs, where training samples are added iteratively and parameters are updated within the subspace of control functions that preserves the end-point mapping at previously learned samples on the manifold of output labels in the first-order approximation sense. In this letter, we prove that this parameter subspace forms a Banach submanifold of finite codimension under nonsingular controls, and we characterize its tangent space. This reveals that TwF corresponds to a continuation/deformation of the control function along the tangent space of this Banach submanifold, providing a theoretical foundation for its mapping-preserving (not forgetting) during the sequential training exactly, beyond first-order approximation.

\keywords{ Control of ensemble of points, geometric control, control for learning}
\end{abstract}

\section{Introduction}
\label{sec:introduction}
While learning for control has been extensively studied, the inverse problem, using control theory to develop new supervised learning algorithms, remains relatively unexplored. Existing results are largely restricted to analyses based on universal approximation and interpolation theorems, which correspond to controllability over finite ensembles of points in the feature space~\cite{cuchiero2020deep,tabuada2022universal}. Other works, such as~\cite{bonnet2023measure,weinan2018mean}, use mean-field control theory to learn a distribution over the input distribution. Furthermore, the work~\cite{vialard2020shooting} proposes a particle-ensemble parameterization which fully specifies the optimal trajectory of the neural ODE.

In this letter, we consider a neural ODE $\dot x = f(x,u)$ where the learning task is to find a control $u^*$ such that the flow $\varphi_t(u^*,\cdot)$ generated by the neural ODE satisfies
\begin{align}\label{eqn:match}
    R(\varphi_T(u^*,x^i)) = y^i, \quad \forall (x^i,y^i)\in (\mathcal{X},\mathcal{Y}),
\end{align}
for some finite $T\ge 0$, a given readout map $R$, and a finite set of training pairs $(\mathcal{X},\mathcal{Y})$ of size $q$. The map $R(\varphi_T(u^*,\cdot))$ is called the {\em end-point mapping}.

The existing methods for training neural ODEs (i.e. finding $u^*$ that satisfies~\eqref{eqn:match}) have limitations, such as the $q$-folded method, which scales quadratically with the dataset size and requires retraining from scratch when new points are added~\cite{agrachev2020control}. To address these limitations, in our earlier work, we introduced Tuning without Forgetting (TwF), an iterative training algorithm for neural ODEs ~\cite{bayram2024control}. TwF sequentially incorporates new points into the training set (i.e. adding one point at a time) and learns the new pair without forgetting the already learned pairs. In this way, TwF solves the central problem in continual learning: learning new points without forgetting previously acquired knowledge. This property enables neural ODEs to be applied across diverse learning problems and settings.


TwF iteratively updates the control function $u$ (i.e. the parameters of neural ODE) to a new control $\tilde u$ to steer the latest introduced training point $x^{j+1}$ to its target $y^{j+1}$, while keeping the end-point mapping at all previously learned pairs $\{(x^i,y^i)\}_{i=1}^{j}$ invariant at every iteration. More precisely, once the $(j+1)$th point is added to the training set, TwF restricts each update to the set of controls that satisfy the invariance condition
\begin{align}\label{eqn:intro}\\[-2em]
    R(\varphi_T(u, x^i)) = y^i, \quad \forall i < j+1,\\[-1.5em]
\end{align} until it reaches a control $\tilde u$. However, the existence of a sequence of control functions (initiated from $u$ and converging to $\tilde u$) that exactly satisfies~\eqref{eqn:intro} at each iteration remains an open question. Therefore, TwF enforces~\eqref{eqn:intro} only in a first-order approximation sense, which may lead to error accumulation over iterations.

Despite this theoretical limitation, TwF has already demonstrated practical impact. For example, it enables the training of robust neural ODEs~\cite{bayram2025control} that are resilient to control disturbances by solving a nonconvex–nonconcave minimax problem over an infinite-dimensional function space. It has also been applied in federated learning~\cite{rai2025control} to mitigate heterogeneity in distributed data.

From a theoretical perspective, classical control theory offers a natural framework to address this issue. In particular, the continuous deformation of the control function under fixed boundary conditions has been extensively studied. Sontag~\cite{sontag2002control} introduced the generation of nonsingular loops (i.e. the linearized controllability along suitable trajectories), while Sussmann~\cite{sussmann1993continuation,sussmann1992new} proposed a point-to-point path-finding method based on continuation/deformation as an alternative to shooting method, establishing the existence of a family of control functions realizing the same point-to-point mapping, which can be parameterized by a single element.

Inspired by these results, we show that the set of controls that learns a training pair $(x^i,y^i)$ 
forms a Banach submanifold if the model has the linearized controllability property (i.e. the first-order controllability of a nonlinear system). Then, we prove that the subspace of controls satisfying~\eqref{eqn:intro} forms a Banach submanifold of finite codimension under the strong memorization property, and we characterize its tangent space. Importantly, we show that the control function can be continuously deformed along this tangent space, thereby preserving the end-point mapping at the previously learned points.



{Our main contributions are as follows:}
\begin{itemize}
    \item We formalize Tuning without Forgetting (TwF) as a framework to sequentially train neural ODEs while preserving the end-point mapping at previously learned training points.
    \item We prove that the subspace of the control function that preserves the end-point mapping at given set of initial points forms a Banach submanifold of the space of bounded functions of finite codimension and we explicitly characterize its tangent space.
    \item We show that TwF can be interpreted as a continuous deformation of the control function along this tangent space, providing a theoretical foundation for its mapping-preserving (not forgetting) during the sequential training exactly, beyond first-order approximation.
\end{itemize}

The rest of the paper is organized as follows. In Section~\ref{sec:prelim}, we provide the preliminaries and formalize the learning problem as a multi–motion planning task. Section~\ref{sec:main_results} presents the main results, beginning with a geometric characterization of the control sets under the linearized controllability properties. We then establish the strong memorization property. Building on these results, we revisit the Tuning without Forgetting (TwF) algorithm from a geometric viewpoint. Section~\ref{sec:proof} is devoted to the proof of the main theorem. Section~\ref{sec:conl} summarizes the contributions and discusses directions for future work.

\section{Preliminaries}\label{sec:prelim}


 Consider the paired set $(\mathcal{X},\mathcal{Y}) = \{(x^i, y^i)\}_{i=1}^q$, where each $x^i \in \mathbb{R}^{n}$ is an initial point, and its corresponding $y^i \in \mathbb{R}^{n_o}$ is a target. The elements of the input ensemble $\mathcal{X}$ are assumed to be pairwise distinct, i.e., $x^i \neq x^j$ for $i \neq j$. Let $\mathcal{I}:=\{1,2,\cdots,q\}$ be an index set that labels the entries of $\mathcal{X}$. Let $\mathcal{X}^j=\{ x^i \in \mathcal{X} | i = 1,2,\cdots,j\} \subseteq \mathcal X$,  called {\em sub-ensemble} of ${\mathcal X}$. Let $\mathcal{Y}^j$ be the corresponding batch of labels for $j>0$. Let $\mathcal X^0$ and $\mathcal{Y}^0$ be the empty sets.
 
 Let $\cV$ be the space of bounded functions from $[0,1]$ to $\mathbb{R}^p$, precisely, $\cV := L^{\infty}([0,1], \mathbb{R}^{p})$. We take the system:
\begin{equation}\label{eqn:control_system}
    \Dot{x}(t) = f( x(t) , u(t) )
\end{equation}
where $x(t)\in\mathbb{R}^{\bar{n}}$ is the state vector at time $t$ and $f(\cdot)$ is a smooth vector field on $\mathbb{R}^{\Bar{n}}$ and {$u(t) \in \cV$. The flow of this system defines the map  
$$\varphi_t(u,x): \cV \times \mathbb{R}^{\bar{n}}  \to \mathbb{R}^{\bar{n}}$$ 
which assigns an initial state $x$ and a control $u$ to the solution of \eqref{eqn:control_system} at $t$, that is, it yields the trajectory $t \mapsto \varphi_t(u,x^i)$ of~\eqref{eqn:control_system} with control $u$ and initialized at $x^i$ at $t=0$. We suppress the subscript $1$ in the notation at $t=1$ for simplicity. Suppose~\eqref{eqn:control_system} has uniformly bounded $\frac{\partial f(x,u)}{\partial u}$ and $\frac{\partial f(x,u)}{\partial x}$ for $t \in [0,1]$ where $x=\varphi_t(u,x^i)$. We introduce the following map to embed an $n$-dimensional space into an $\bar n$-dimensional one where $\bar n \geq n$:
\begin{align}\label{eqn:uplift}
    E: \mathbb{R}^{n} \to \R^{\bar n}: x \mapsto E(x):=(x,0,\ldots,0).
\end{align}
Let $R:\mathbb{R}^{\bar{n}} \to \mathbb{R}^{n_o}$ be a given function, called the {\em readout map} such that the Jacobian of $R$ is of full row rank and $R(\cdot)$ is bounded, linear and $1$-Lipschitz. Note that any projection function satisfies these conditions on $R(\cdot)$ and both functions $E(\cdot)$ and $R(\cdot)$ are independent of the control $u$. We call $R(\varphi(u,E(\cdot)))$ {\em the end-point mapping}, that is, we have the following end-point mapping for a given $u$ at $x^i$:
\begin{align}\\[-2em]
    x^i \in \mathbb{R}^{n} \xrightarrow{E} \Bar{x}^i \in \mathbb{R}^{\Bar{n}}  \xrightarrow{\varphi_T(u,\cdot)} \Bar{y}^i \in \mathbb{R}^{\bar{n}} \xrightarrow{R} y^i  \in \mathbb{R}^{n_o} 
\end{align}

{The learning problem turns into finding a control function $u$ that performs motion planning for initial points $\bar{x}^i$ to a point in the set $\{ \bar{y}^i \in \mathbb{R}^{\bar{n}} : R(\bar{y}^i) = y^i\}$ for all $i \in \mathcal{I}$ \textit{simultaneously}. } We define the set $S_i:=\{ \bar{y} \in \mathbb{R}^{\bar{n}} : R(\bar{y}) = y^i\}$ for $i \in \mathcal{I}$, that is, $S_i=R^{-1}(y^i)$. Now, the learning is the problem of finding $u$ so that 
\begin{equation}
       \varphi(u,E(x^i))=\bar{y}^i \mbox{ where  } \bar{y}^i \in S_i, \forall i \in \mathcal{I},  
\end{equation}
for a fixed uplift function $E$ and readout map $R$. We call this problem a {\em multi-motion planning problem}. In Figure~\ref{fig:mmp}, we provide an example of a classification problem, formulated as a multi-motion planning problem.

\begin{figure}[h!]
\centering
\includegraphics[width=0.8\textwidth]{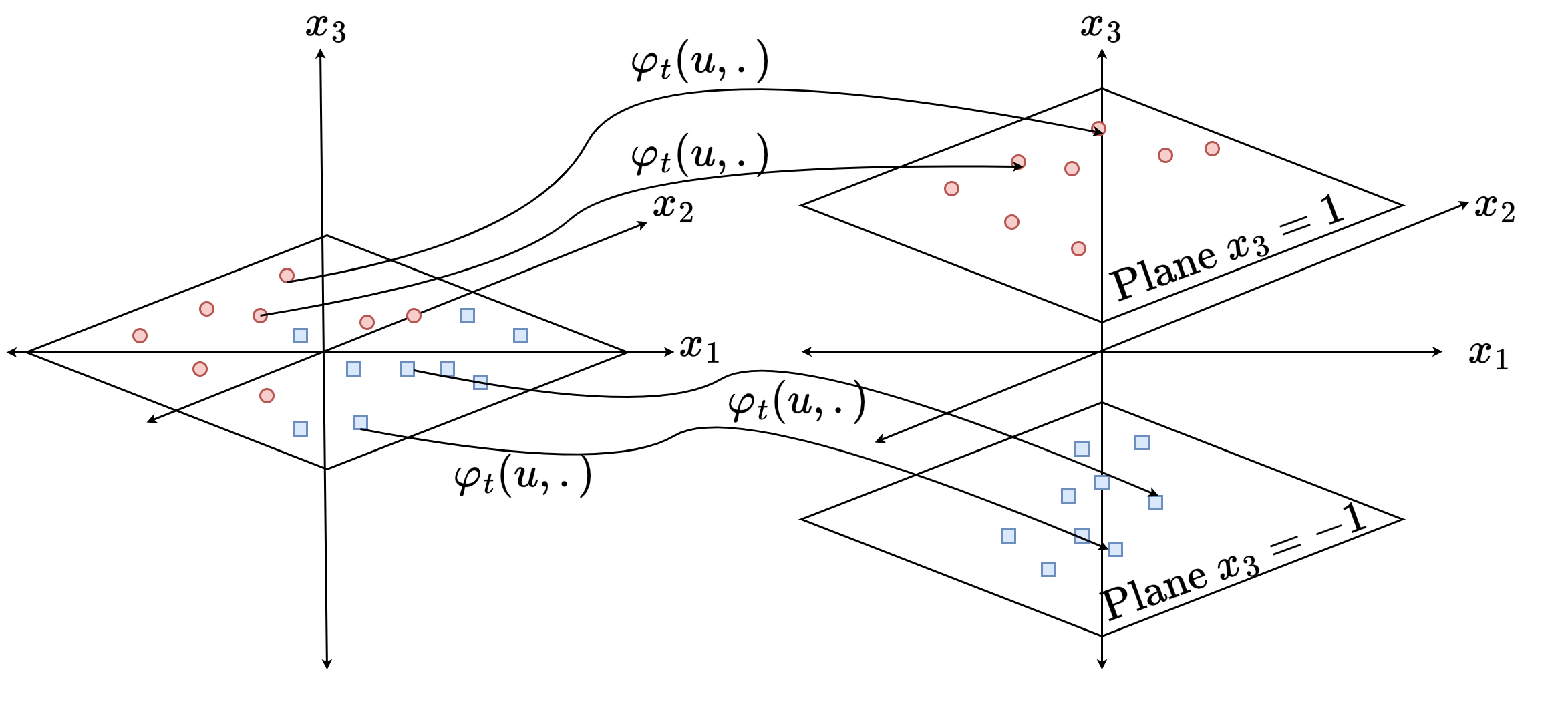}
\caption{Illustration of the learning problem as a multi-motion planning task. Initial points in the input space (left) are mapped by the flow $\varphi_t(u,\cdot)$ induced by a control function $u$ to their corresponding target sets $S_i=R^{-1}(y^i)$ in the output space (right). In this example, the red points are mapped to the plane $x_3=1$, and the blue points are mapped to the plane $x_3=-1$. Each point must reach its assigned plane while respecting the output constraints $R^{-1}(y^i)$ simultaneously. so that the points are classified according to their target planes. For any new test point, the same control function $u$ is applied, enabling the trained model (i.e., the control system with control $u$) to perform the classification task.}
    \label{fig:mmp}
\end{figure}

To formalize this problem, we next define the memorization property for a dynamical system.

\begin{definition}[Memorization Property]\label{defn:mp} Assume that a paired set $(\mathcal{X},\mathcal{Y})$, a fixed readout map $R$ and an up-lift function $E$ are given. The control $u$ is said to have {\em memorized the ensemble $(\mathcal{X},\mathcal{Y})$} for the model $\dot{x}(t)=$ $f(x(t), u(t))$ if the following holds for a finite $T\geq0$:
\begin{align}\label{eqn:defn_fixed_ensemble}\\[-2em]
    R( \varphi_T(u,{E}(x^i)))=y^i, \forall x^i \in \mathcal{X},
\end{align}
\end{definition} 

In other words, the dynamical system has memorized the ensemble if the end-point mapping maps each $x^i \in \mathbb{R}^{n}$ to the corresponding $y^i \in \mathbb{R}^{n_o}$.

{ We introduce the following subspace of our control space $\cV$ for $i\in\mathcal{I}$:
\begin{equation}
    U(x^i,y^i):=\{ u \in \cV | \varphi(u,E(x^i)) \in R^{-1}(y^i) \}
\end{equation}
Then, we need to show that there exists a control function $u \in \bigcap_{i=i}^q U(x^i,y^i)$ to prove that the model has memorization property for the ensemble $(\mathcal{X},\mathcal{Y})$. 

We define the set $$\Delta^q :=\{ [ E(x^1)^\top , \cdots , E(x^q)^\top ]^\top \in E(\mathbb{R}^{\bar{n}})^{q}  \rvert E(x^i) = E(x^j) \mbox{ for } i \neq j \} .$$
Let
$ (\mathbb{R}^{\bar{n}})^{(q)} := (\mathbb{R}^{{\bar{n}}})^q \setminus \Delta^q$ be the complement of $\Delta^q$ on $(\mathbb{R}^{\bar{n}})^q$. Then, we define the set of control vector fields of the $q$-folded system of the model~\eqref{eqn:control_system},
\begin{align}
    \mathcal{F} = \{[f^\top(x,u), \cdots, f^\top(x,u)]^\top  \in \mathbb{R}^{\bar{n} q} | u \in \cV
      \}
\end{align}
In words, we copy $\bar{n}$ dimensional dynamics in~\eqref{eqn:control_system} $q$-times, creating an $\bar{n}q$-dimensional vectors $F(x(t),u(t)) \in \mathcal{F}$. 

We have a sufficient condition for the existence of a control function $u \in \bigcap_{i=1}^q U(x^i, y^i)$ as an application of the Chow-Rashevsky theorem~\cite{brockett2014early}:
\begin{lemma}\label{lem:control}
If the set of control vector fields of $q$-folded system of the model~\eqref{eqn:control_system} is bracket-generating in $(\mathbb{R}^{\bar n})^{(q)}= (\mathbb{R}^{\bar n})^q \setminus \Delta^q$, then there exists a control function $u$ such that $R(\varphi(u,E(x^i)))=y^i,  \forall i \in \mathcal{I}$.
\end{lemma}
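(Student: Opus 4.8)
The plan is to recast the $q$ simultaneous motion--planning requirements as a single controllability statement for the $q$-folded system on the configuration space of $q$ pairwise distinct points, and then apply the Chow--Rashevsky theorem. I would begin by recording the geometric setting. Writing $F(X,u):=(f(x_1,u)^\top,\ldots,f(x_q,u)^\top)^\top$ for $X=(x_1,\ldots,x_q)$, the $q$-folded system is $\dot X=F(X,u)$, with flow $\Phi_t(u,X)=(\varphi_t(u,x_1),\ldots,\varphi_t(u,x_q))$. Since $f$ is smooth with $\partial f/\partial x$ uniformly bounded along the relevant trajectories, each $\varphi_t(u,\cdot)$ is a globally defined diffeomorphism of $\R^{\bar n}$, hence injective; consequently $\Phi_t(u,\cdot)$ leaves the open set $M:=(\R^{\bar n})^{(q)}=(\R^{\bar n})^q\setminus\Delta^q$ invariant, i.e.\ distinct points stay distinct. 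This is precisely why the bracket--generating hypothesis is natural on $M$ rather than on all of $(\R^{\bar n})^q$. Moreover, $\Delta^q$ is a finite union of linear subspaces each of codimension $\bar n$, so for $\bar n\ge 2$ its complement $M$ is a connected smooth manifold.

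Next I would identify the source and a valid target configuration inside $M$. The source is $X_0:=(E(x^1),\ldots,E(x^q))$, which lies in $M$ because $E$ is injective and the $x^i$ are pairwise distinct. For the target, each $S_i=R^{-1}(y^i)$ is an affine subspace of $\R^{\bar n}$ of codimension $n_o$ (as $R$ is linear and its Jacobian has full row rank); provided $\bar n>n_o$ --- so that each $S_i$ is positive--dimensional; when $\bar n=n_o$ one must in addition assume the labels $y^i$ pairwise distinct --- one can select $\bar y^i\in S_i$ that are pairwise distinct, so that $X_T:=(\bar y^1,\ldots,\bar y^q)\in M\cap(S_1\times\cdots\times S_q)$ and this set is nonempty. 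I would flag this mild dimensional condition, as it is the only place where the geometry of $R$ and the label multiplicities enter.

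The controllability step is then a direct application of the hypothesis: the family $\mathcal F$ of control vector fields of the $q$-folded system is bracket--generating on the connected manifold $M$, so by the Chow--Rashevsky theorem (equivalently, by the orbit theorem combined with the full--rank Lie algebra condition, which forces the single orbit through $X_0$ to be all of $M$) there exists an admissible control steering $X_0$ to $X_T$ in finite time. The control produced by the construction is piecewise constant, hence lies in $L^\infty$; after the standard time--normalization we take it to be defined on $[0,1]$, so that $u\in\cV$ and $\Phi(u,X_0)=X_T$. Reading off the $i$-th block of coordinates yields $\varphi(u,E(x^i))=\bar y^i$, and therefore $R(\varphi(u,E(x^i)))=R(\bar y^i)=y^i$ for every $i\in\mathcal I$, which is the claim.

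The main difficulty is not the invocation of Chow--Rashevsky but the interface around it. First, one must certify that the whole argument stays within the connected component of $M$ containing $X_0$; this is exactly what the injectivity of the flow together with $\bar n\ge 2$ provides, and it is the structural reason the diagonal is excised. Second, one must genuinely exhibit an admissible target $X_T\in M$ with the prescribed readouts, which is what forces $\bar n>n_o$ (or distinct labels): otherwise the constrained target set could reduce to a repeated point and fall outside $M$. A further point, worth stating explicitly, is that for a system with drift the orbit may strictly contain the reachable set, so one uses that the $q$-folded dynamics inherits the driftless (reparametrization--invariant) structure of \eqref{eqn:control_system} --- which is also what makes the normalization of the steering time to $[0,1]$ exact.
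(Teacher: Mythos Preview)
Your argument is correct and is precisely the content of the result the paper invokes: the paper does not give its own proof but simply refers to \cite[Proposition~6.1]{agrachev2022control}, whose proof is exactly the reduction you describe---lift the $q$ simultaneous steering constraints to a single point-to-point problem for the $q$-folded system on the connected configuration space $(\R^{\bar n})^{(q)}$ and apply Chow--Rashevsky. So your approach and the paper's (cited) approach coincide.

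Where you go further than the paper is in making explicit the interface hypotheses the lemma statement leaves implicit: the connectedness of $(\R^{\bar n})^{(q)}$ for $\bar n\ge 2$, the existence of a pairwise-distinct target configuration in $S_1\times\cdots\times S_q$ (which needs $\bar n>n_o$ or distinct labels), and the passage from the orbit theorem to genuine reachability when drift is present. These are real conditions that the bare phrase ``bracket-generating'' does not by itself guarantee; the cited reference handles them for the class of systems considered there, and your flagging of them is appropriate rather than a defect. In particular, your remark that the time-normalization to $[0,1]$ and the equality of orbit and reachable set both rely on a driftless/symmetric structure is a point the paper's statement suppresses but the underlying result in \cite{agrachev2022control} does address.
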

\begin{proof}
    See~\cite[Proposition 6.1]{agrachev2022control} for the proof of Lemma~\ref{lem:control}.
\end{proof}
{
\begin{remark}
We note that the control function $u$ still belongs to $\cV (=L_{\infty}([0,1], \mathbb{R}^{p}))$, and not $L_{\infty}([0,1], \mathbb{R}^{qp})$. Therefore, it is a stronger notion than the controllability of the model~\eqref{eqn:control_system} at each $x^i \in \mathcal{X}$. \end{remark}
}
For convenience, we assume that $E$ is the identity function, meaning $\overline{n}=n$, but our result holds for any {continous} injective up-lift function given in~\eqref{eqn:uplift}. Thus, we interchangeably use $x^i$ and $E(x^i)$ for any $x^i \in \mathcal{X}$. 

\paragraph{Cost Functional:} We define per-sample cost functional $\mathcal{J}_i(u)$ for a given point $x^i$ as follows:
 \begin{align}\label{eqn:per_sample}
     \mathcal{J}_i(u) = \frac{1}{2}\|R(\varphi(u,{x}^i)) - y^i \|^2 
 \end{align}\\[-1.5em]
 We are interested in the minimization of the functional $\cal J:\cal V \to \mathbb{R}$, including regularization, defined as
\begin{align}\label{eqn:cost_cont}
    \mathcal{J}( u , \mathcal{X} ) := \sum_{i=1}^{q} \| R(\varphi(u,x^i)) - y^i \|^2   + \lambda \int_{0}^{T} |u(\tau)|^2 d\tau
\end{align}
 where $\lambda$ is some regularization coefficient.

\section{Main Results}\label{sec:main_results}

\subsection{Geometry of The Set of Control}

In this work, we revisit the TwF that is an iterative algorithm to find $u \in \bigcap_{i=1}^q U(x^i,y^i)$. The core idea is to restrict updates on the control function to the intersection of feasible sets, i.e. $\bigcap_{i=1}^j U(x^i,y^i)$, and progressively enlarge this intersection as new data arrive $(x^{j+1},y^{j+1})$, allowing the model to learn new points without forgetting prior ones. Suppose that at iteration $k$ the control $u^k(=u_j)$ satisfying:
\begin{align}\\[-3em]
    u^k \in \bigcap_{i=1}^{j} U(x^i,y^i),\\[-2em]
\end{align}
so that all points in $(\mathcal{X}^j,\mathcal{Y}^j)$ are correctly mapped via the end-pint mapping. When a new pair $(x^{j+1},y^{j+1})$ is introduced, the iterative method seeks a control $u^{k+\ell_1}(=u_{j+1})$ such that
\vspace{-0.3em}
\begin{align}\\[-2em]
    u^{k+\ell_1} \in \bigcap_{i=1}^{j+1} U(x^i,y^i),\\[-2.5em]
\end{align}
while the intermediate iterations $u^{k+\ell}$, for $0\leq\ell\leq\ell_1$, remain in
\vspace{-0.3em}
\begin{align}\\[-2em]\label{eqn:intermediate}
    u^{k+\ell} \in \bigcap_{i=1}^{j} U(x^i,y^i),\\[-2em]
\end{align}
ensuring that the end-point mapping at previously learned points is preserved during each update. We provide an overview of the algorithm in Figure~\ref{fig:algo}.\begin{figure}[htp!]
     \centering
     \includegraphics[width=0.63\linewidth]{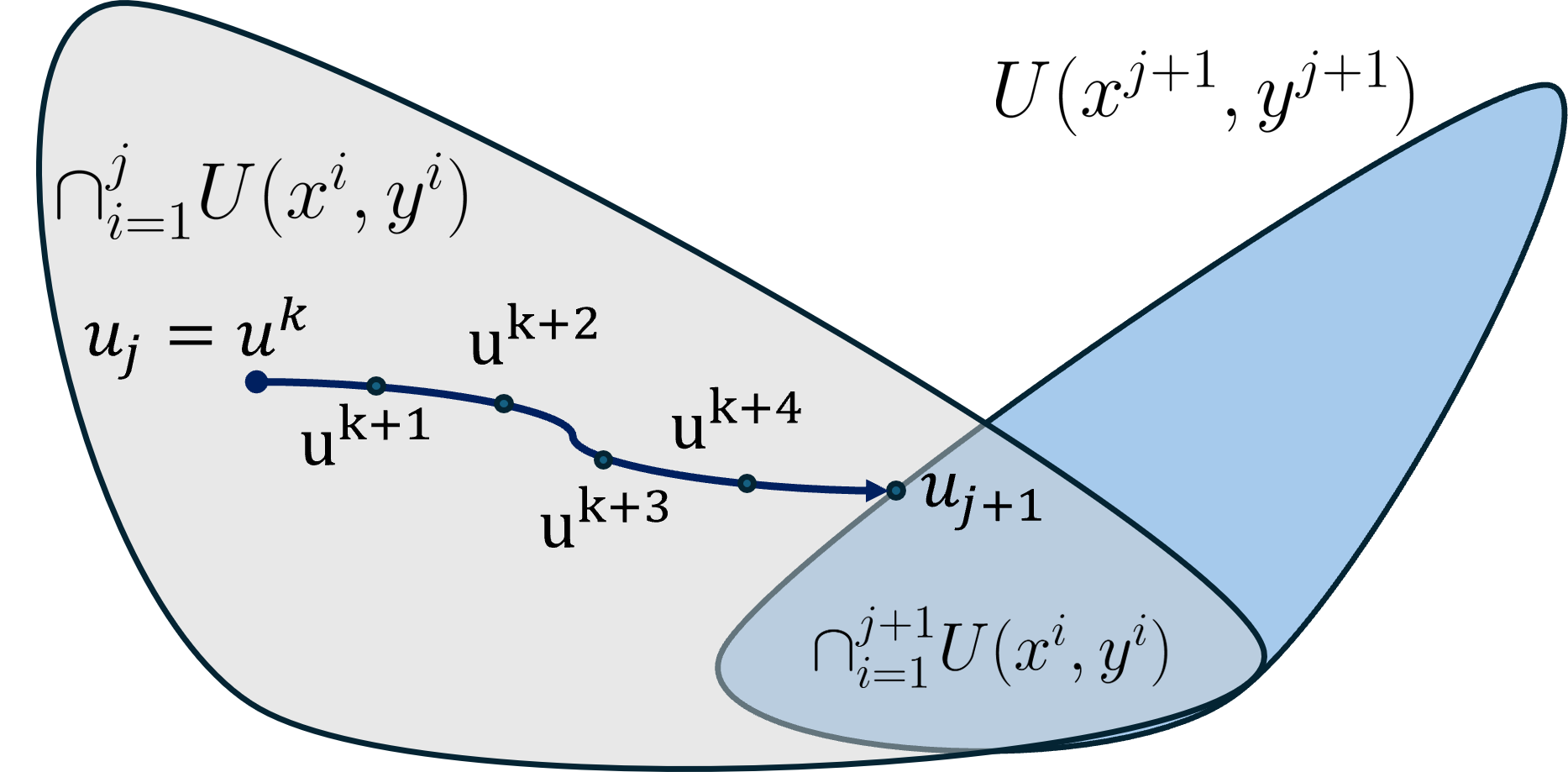}
     \caption{Illustration of the iterative control update process. The gray region represents the set of controls that memorize the sub-ensemble $( \mathcal{X}^j,\mathcal{Y}^j)$, (i.e. $\bigcap_{i=1}^j U(x^i, y^i)$), while the blue region corresponds to the set of controls that memorize the new pair $(x^{j+1},y^{j+1})$ (i.e. $U(x^{j+1}, y^{j+1})$). The intersection of these sets, $\bigcap_{i=1}^{j+1} U(x^i, y^i)$, is the set where $u_{j+1}(=u^{k+\ell_1})$ should lie at the end of iteration for TwF to memorize $(\mathcal{X}^{j+1},\mathcal{Y}^{j+1})$. The dark blue trajectory $u_j,u^{k+1}, u^{k+2}, \dots, u_{j+1}$ shows successive updates obtained by projecting the gradient of the per-sample cost for the new point onto the tangent space of $\bigcap_{i=1}^j U(x^i, y^i)$, ensuring that the end-point mapping at all previous points are preserved.}
     \label{fig:algo}
 \end{figure}

Having a non-empty intersection of the control set $\cap_{i=1}^q U(x^i,y^i)$ is not sufficient to enable the use of gradient methods and does not guarantee the existence of an update sequence satisfying~\eqref{eqn:intermediate}; differentiability of the evolving intersection $\cap_{i=1}^j U(x^i,y^i)$ is also required.

Therefore, we first prove that
$
\bigcap_{i=1}^j U(x^i, y^i)
$
forms a Banach submanifold of $\mathcal{V}$ with finite codimension. This result allows us to characterize its tangent space and derive a closed-form expression for the projection operation onto it. At each iteration, the update in control
$
\delta u^k := u^{k+1} - u^k
$
is chosen as the projection of the first-order variation of the per-sample cost, $\mathcal{J}_{j+1}$, on the tangent space of the set $\bigcap_{i=1}^j U(x^i, y^i)$, ensuring that all iterations $u^k$ remain in the set $\bigcap_{i=1}^j U(x^i, y^i)$.

To address the differentiable properties of the set $\cap_{i=1}^q U(x^i,y^i)$, we need the following properties.
\subsubsection{Linearized Controllability Property}
 Consider the trajectory  $\varphi_t(u,x^i)$. The first-order variation in $\varphi_t(u, x^i)$, denoted by $\delta \varphi_t(u, 
 x^i)=\varphi_t(u+ \delta u, x^i)-\varphi_t(u,x^i)$, obeys a linear time-varying equation, which is simply the linearization of the control system~\eqref{eqn:control_system} around the trajectory $\varphi_t(u, x^i)$. We have the following LTV system:
 \begin{align}\label{eqn:defn_ltv}
    \dot{z}(t) =  \frac{\partial f(x,u)}{\partial x} z(t) + \frac{\partial f(x,u)}{\partial u}  v(t),
\end{align}
where $z(t)=\delta \varphi_t(u,x^i)$, $x(t)=\varphi_{t}(u,x^i)$, $v(t) \in \cV$. Denote the state transition matrix of the system in~\eqref{eqn:defn_ltv} by $\Phi_{(u, x^i)}(1,t)$ for initial point $x^i$ at control $u$. For completeness, we have the following lemma:
\begin{lemma}\label{lem:variation_t}
Suppose that a given control function $u$ has memorized the pair of points $(x^i,y^i)$ for the model~\eqref{eqn:control_system}. Then,
\begin{equation*}
     \delta \varphi_t(u,x^i) = \int_{0}^{t}  \Phi_{(u,x^i)}(t,\tau)  \frac{\partial f( x, u ) }{\partial u} \rvert_{(x=\varphi_{\tau}(u,x^i),u)}  \delta u(\tau)  d\tau.
\end{equation*}
\end{lemma}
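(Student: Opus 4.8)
The statement is the classical Duhamel (variation–of–constants) representation of the linearized flow, so the plan is to (i) verify that the first–order variation $z(t) := \delta\varphi_t(u,x^i)$ solves the linear time–varying system~\eqref{eqn:defn_ltv} with zero initial data, and then (ii) write down its solution via the state transition matrix $\Phi_{(u,x^i)}$. For step (i), fix a perturbation direction $\delta u \in \cV$ and consider, for small $\varepsilon$, the trajectory $t \mapsto \varphi_t(u+\varepsilon\,\delta u,x^i)$. Since the initial point $x^i$ is held fixed, $\varphi_0(u+\varepsilon\,\delta u,x^i)=x^i$ for every $\varepsilon$, hence $z(0)=0$. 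Differentiating the integral equation $\varphi_t(u+\varepsilon\,\delta u,x^i) = x^i + \int_0^t f\big(\varphi_\tau(u+\varepsilon\,\delta u,x^i),\,u(\tau)+\varepsilon\,\delta u(\tau)\big)\,d\tau$ in $\varepsilon$ at $\varepsilon=0$, and using that $f$ is smooth with $\partial f/\partial x$ and $\partial f/\partial u$ uniformly bounded along the trajectory on $[0,1]$, one obtains that $z(t) = \partial_\varepsilon\big|_{\varepsilon=0}\varphi_t(u+\varepsilon\,\delta u,x^i)$ exists and satisfies $\dot z(t) = \tfrac{\partial f}{\partial x}(\varphi_t(u,x^i),u(t))\,z(t) + \tfrac{\partial f}{\partial u}(\varphi_t(u,x^i),u(t))\,\delta u(t)$, which is exactly~\eqref{eqn:defn_ltv} with $v = \delta u$.

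For step (ii), let $\Phi_{(u,x^i)}(t,\tau)$ be the state transition matrix of the homogeneous part $\dot w = \tfrac{\partial f}{\partial x}\,w$, so that $\partial_t \Phi_{(u,x^i)}(t,\tau) = \tfrac{\partial f}{\partial x}\,\Phi_{(u,x^i)}(t,\tau)$ and $\Phi_{(u,x^i)}(t,t)=I$. The variation–of–constants formula for a linear ODE with zero initial condition then yields
\[
z(t) = \int_0^t \Phi_{(u,x^i)}(t,\tau)\,\frac{\partial f(x,u)}{\partial u}\Big|_{(x=\varphi_\tau(u,x^i),\,u(\tau))}\,\delta u(\tau)\,d\tau,
\]
which is the claimed identity. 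One verifies directly, by differentiating under the integral sign using the two properties of $\Phi_{(u,x^i)}$ above, that this $z$ solves~\eqref{eqn:defn_ltv} with $z(0)=0$; uniqueness of solutions of the LTV system (guaranteed by the uniform bound on $\partial f/\partial x$ together with Gronwall's inequality) then closes the argument.

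The only genuinely technical point is the rigorous justification in step (i) that $\varphi_t(\cdot,x^i)$ is Gâteaux (indeed Fréchet) differentiable with respect to $u$ in the $L^\infty$ topology, i.e. that the remainder $r_\varepsilon(t) := \varphi_t(u+\varepsilon\,\delta u,x^i) - \varphi_t(u,x^i) - \varepsilon\,z(t)$ is $o(\varepsilon)$ uniformly in $t\in[0,1]$. I would handle this by a two–step Gronwall estimate: first, the uniform bound on $\partial f/\partial u$ gives $\|\varphi_t(u+\varepsilon\,\delta u,x^i) - \varphi_t(u,x^i)\|_\infty = O(\varepsilon)$; then, writing the ODE satisfied by $r_\varepsilon$ and applying Taylor's theorem with integral remainder to $f$ (using smoothness of $f$ and the previous $O(\varepsilon)$ bound on the trajectory difference) produces a source term of size $o(\varepsilon)$, and a second application of Gronwall propagates this to $\|r_\varepsilon\|_\infty = o(\varepsilon)$. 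I would also remark that the memorization hypothesis on $u$ plays no role in this lemma — the identity holds for every $u \in \cV$ — and is stated only because this is the case needed in the sequel.
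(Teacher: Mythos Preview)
Your argument is correct and is precisely the standard variation-of-constants derivation: linearize the flow map in the control to obtain the LTV system~\eqref{eqn:defn_ltv} with zero initial condition, then apply Duhamel's formula via the state transition matrix~$\Phi_{(u,x^i)}$. The paper does not actually prove Lemma~\ref{lem:variation_t} in-text---it merely refers the reader to~\cite[Lemma~2]{bayram2024control}---so there is no alternative route to contrast with; the classical argument you give is almost certainly what is done there as well. Your side remarks are both on point: the two-step Gronwall bootstrap is the right way to upgrade formal differentiation to genuine Fr\'echet differentiability in $L^\infty$, and the memorization hypothesis on $u$ indeed plays no role in the identity itself.
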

See~\cite[Lemma 2]{bayram2024control} for the proof of Lemma~\ref{lem:variation_t}. Then, we can state the following property for the system~\eqref{eqn:control_system}:

\begin{definition}[Linearized Controllability Property]
If, for all $u \in U(x^i,y^i)$, the linear time-varying system in~\eqref{eqn:defn_ltv}  is controllable, then we say that the system~\eqref{eqn:control_system} has the Linearized Controllability Property (LCP) at $x^i$.
\end{definition}    

LCP is equivalent to the first-order controllability of the nonlinear system~\eqref{eqn:control_system} along the trajectory corresponding to the initial state $x^i$ and the control $u$~\cite{sontag2002control}. LCP fails precisely along certain exceptional trajectories, which are referred to in the literature by various names, including ‘abnormal extremals’ or ‘singular trajectories’~\cite{sussmann1993continuation}. In~\cite{strichartz1986sub}, the notion of Strong Bracket Generating (SBG) system is introduced.
It is proven therein that for an SBG system, the only abnormal extremals are the trajectories corresponding to the trivial control $u(t)=0$ for $t \in [0,1]$,~\cite[Theorem 3]{sussmann1993continuation}.

On the one hand, the general conditions ensuring the nonexistence of abnormal extremals for nonlinear systems remain a relatively open problem. On the other hand, in neural ODEs, we often consider over-parameterized architectures; for example, nonlinear dynamics of the form
\[
\dot{x}(t) = \tanh\big(U(t) x(t) + b(t)\big), 
\quad U \in L_\infty\big([0,1], \mathbb{R}^{\bar{n}\times \bar{n}}\big), 
\quad b \in L_\infty\big([0,1], \mathbb{R}^{\bar{n}}\big),
\]
where the control $u$ is the tuple $(U,b)$. In such cases, one can easily verify that the LCP condition is satisfied, thereby ruling out the presence of abnormal extremals along nonsingular trajectories.

\subsubsection{Strong Memorization Property}

 First, recall that two Banach submanifolds $M_1$,$M_2$ of a Banach space $N$ intersect transversally at $x\in N$ if either $x \notin M_1 \cap M_2$ or if $T_x M_1 + T_x M_2 = T_x N$. In words, two manifolds intersect transversally at $x$ either if $x$ does not belong to the intersection, or if the tangent spaces of $M_1$ and $M_2$ at $x$ together span the tangent space of the ambient space $N$. It is known that tranversal intersections are {\em generic}. Now, we can define the following property:

\begin{definition}[Strong Memorization Property]\label{defn:smp}
We say that the model~\eqref{eqn:control_system}  has the strong memorization property for an ensemble $(\mathcal{X},\mathcal{Y})$ if the model has memorization property for $(\mathcal{X},\mathcal{Y})$ and the sets $U(x^i,y^i)$ intersect transversally. It has the strong memorization property of degree $q$ over $\mathbb{R}^{\bar{n}}$ if it has strong memorization property for all set $\{(x^i,y^i)\}_{i=1,\cdots,q}$ with $x^i \in \mathbb{R}^{\bar{n}}$ and $y^i \in \mathbb{R}^{n_o}$. 
\end{definition}

To paraphrase, the strong memorization property states that the control sets \( U(x^i, y^i) \) for \( i = 1, \ldots, q \) have a non-empty intersection and intersect transversally.

\subsubsection{Main Theorem}

Now, we have the following assumptions for our algorithm. Note that systems on manifolds can also be considered. We can replace $\mathbb{R}^{\bar{n}}$ with $\mathcal{M}$ where $\mathcal{M}$ is the maximal subset of $\mathbb{R}^{\bar{n}}$ such that the $q$-folded system is controllable over $\mathcal{M}^q= \mathcal{M} \times \mathcal{M} \times \ldots \mathcal{M}$ ($q$-times). However, this generalization only complicates the notation. Therefore, we restrict our presentation to $\mathbb{R}^{\bar{n}}$ for clarity.


\begin{assumption}[Memorization Property]\label{ass:mem}
 The set of control vector fields of the $q$-folded system of the model~\eqref{eqn:control_system} is bracket-generating in $(\mathbb{R}^{\bar{n} })^{(q)}$. 
\end{assumption}
\begin{assumption}[Linearized Controllability]\label{ass:lpc}
    The system $\dot x(t) = f(x(t), u(t))$ on a manifold $\mathbb{R}^{\bar{n} }$ has the linearized controllability property at all $x^i \in \mathcal{X}$.
\end{assumption}
\begin{assumption}[Strong Memorization]\label{ass:controllable}
The system $\dot x(t) = f(x(t), u(t))$  has the strong memorization property of degree $q$ over $\mathbb{R}^{\bar{n} }$. 
\end{assumption}

\begin{theorem}
\label{thm:control_w_smp} 
Suppose that assumptions (A\ref{ass:mem}), (A\ref{ass:lpc}), and (A\ref{ass:controllable}) hold, and let $(\mathcal{X},\mathcal{Y})$ be a paired set of cardinality $q$. Then, the space of controls $\cap_{i=1}^j U(x^i,y^i)$ is a Banach submanifold of $\cV$ of finite-codimension for $1\leq j \leq q$.
\end{theorem}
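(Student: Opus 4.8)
\noindent\emph{Proof plan.} The plan is to exhibit each set $U(x^i,y^i)$, and then the intersection $\bigcap_{i=1}^j U(x^i,y^i)$, as the regular preimage of a point under a $C^1$ map into a finite‑dimensional space, and then to invoke the submersion (preimage) theorem for Banach manifolds, whose only structural requirement beyond surjectivity of the differential is that the kernel be topologically complemented — which is automatic here because the kernel has finite codimension.

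First I would introduce the evaluation maps $\Psi_i:\cV\to\mathbb{R}^{n_o}$, $\Psi_i(u):=R(\varphi(u,x^i))$, so that $U(x^i,y^i)=\Psi_i^{-1}(y^i)$. Using the standing uniform bounds on $\partial f/\partial x$ and $\partial f/\partial u$ together with the standard theory of flows of~\eqref{eqn:control_system} under $L^\infty$ controls, $\Psi_i$ is of class $C^1$, and by Lemma~\ref{lem:variation_t} and linearity of $R$ its differential is
\[
D\Psi_i(u)\,\delta u \;=\; DR\int_{0}^{1}\Phi_{(u,x^i)}(1,\tau)\,\frac{\partial f}{\partial u}\Big|_{(\varphi_\tau(u,x^i),u)}\,\delta u(\tau)\,d\tau .
\]
As $\delta u$ ranges over $\cV$, the image of the inner integral operator is the reachable subspace of the LTV system~\eqref{eqn:defn_ltv} along $\varphi_t(u,x^i)$; by the Linearized Controllability Property (A\ref{ass:lpc}) this subspace equals $\mathbb{R}^{\bar n}$ for every $u\in U(x^i,y^i)$, and since $DR$ has full row rank, $D\Psi_i(u)$ is onto $\mathbb{R}^{n_o}$. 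Its kernel then has finite codimension $n_o$, hence is complemented, so the submersion theorem shows that $U(x^i,y^i)$ is a closed $C^1$ Banach submanifold of $\cV$ of codimension $n_o$ with $T_u U(x^i,y^i)=\ker D\Psi_i(u)$; non-emptiness here, and of all the intersections below, is supplied by A\ref{ass:mem} through Lemma~\ref{lem:control}.

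For the intersection I would stack the evaluation maps into $\Psi^{(j)}:=(\Psi_1,\dots,\Psi_j):\cV\to(\mathbb{R}^{n_o})^j$, which is $C^1$ with $\bigcap_{i=1}^j U(x^i,y^i)=(\Psi^{(j)})^{-1}(y^1,\dots,y^j)$ and $D\Psi^{(j)}(u)\,\delta u=(D\Psi_1(u)\delta u,\dots,D\Psi_j(u)\delta u)$. The crux is that $D\Psi^{(j)}(u)$ is surjective onto $(\mathbb{R}^{n_o})^j$ at every $u$ in the intersection; equivalently the subspaces $T_u U(x^i,y^i)=\ker D\Psi_i(u)$ are in general position, i.e.\ $\bigl(\bigcap_{i=1}^{k-1}\ker D\Psi_i(u)\bigr)+\ker D\Psi_k(u)=\cV$ for $2\le k\le j$. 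This is exactly what the Strong Memorization Property (A\ref{ass:controllable}) is designed to furnish at each point of the intersection, and it can equivalently be organized as an induction on $j$: assuming $\bigcap_{i=1}^{j-1}U(x^i,y^i)$ is a submanifold with tangent space $\bigcap_{i=1}^{j-1}\ker D\Psi_i(u)$, transversality with $U(x^j,y^j)$ is precisely the displayed condition for $k=j$, and the standard fact that a transversal intersection of Banach submanifolds is again one closes the loop. Granting surjectivity of $D\Psi^{(j)}(u)$, its kernel $\bigcap_{i=1}^j\ker D\Psi_i(u)$ has finite codimension $jn_o$ and is complemented, so the Banach‑manifold submersion theorem yields that $\bigcap_{i=1}^j U(x^i,y^i)$ is a closed $C^1$ Banach submanifold of $\cV$ of finite codimension $jn_o$, with that subspace as tangent space; taking $j=1$ recovers the single‑point case and the claim follows for all $1\le j\le q$.

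The main obstacle is precisely the surjectivity of the stacked differential $D\Psi^{(j)}$: pairwise transversality of the $U(x^i,y^i)$ is not by itself enough to make an intersection of three or more of them a submanifold of the expected codimension, so care is needed to read A\ref{ass:controllable} as mutual transversality of the whole family — i.e.\ as surjectivity of $D\Psi^{(j)}$ — and to ensure this holds uniformly over the intersection rather than at a single reference control, which is exactly where strong memorization \emph{of degree $q$} (transversality for every admissible placement of the points) is used. A secondary, routine point is the $C^1$‑regularity of $\Psi_i$ and the validity of its derivative formula, both of which follow from the standing uniform‑boundedness hypotheses and Lemma~\ref{lem:variation_t} by the classical theory of parameter dependence of ODE flows.
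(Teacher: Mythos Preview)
Your proposal is correct and runs parallel to the paper's argument, but packages it somewhat differently. The paper works with the unrestricted endpoint map $G_{x^i}:\cV\to\mathbb{R}^{\bar n}$, $u\mapsto\varphi(u,x^i)$, shows it is a submersion by exhibiting an explicit continuous linear right inverse of $dG_{x^i}|_u$ built from the controllability Gramian of the linearized system~\eqref{eqn:defn_ltv}, and then realizes $U(x^i,y^i)=G_{x^i}^{-1}(S_i)$ as the preimage of the submanifold $S_i=R^{-1}(y^i)\subset\mathbb{R}^{\bar n}$ via the Banach regular value theorem; for the intersection it invokes the Transversal Mapping Theorem of Abraham--Marsden--Ratiu, checking the transversality and splitting hypotheses separately, and then iterates pairwise (``iterating a finite number of times proves the result''). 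You instead compose with $R$ from the outset, write $U(x^i,y^i)=\Psi_i^{-1}(y^i)$ as a point preimage directly in $\mathbb{R}^{n_o}$, and for the intersection stack the $\Psi_i$ into a single map $\Psi^{(j)}:\cV\to(\mathbb{R}^{n_o})^j$ so that one application of the submersion theorem covers every $j$ at once. Your route is slightly more economical---the codimension $jn_o$ and the tangent-space identification $T_u\bigl(\cap_i U(x^i,y^i)\bigr)=\cap_i\ker D\Psi_i(u)$ drop out immediately---while the paper's route is more modular and makes the transversal-intersection step explicit. Both arguments rest on reading A\ref{ass:controllable} as \emph{mutual} transversality of the whole family rather than merely pairwise transversality; you flag this honestly as the main obstacle, and the paper absorbs it into the phrase ``iterating a finite number of times,'' which tacitly requires that each partial intersection $\cap_{i<k}U(x^i,y^i)$ meet $U(x^k,y^k)$ transversally.
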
}
See Section~\ref{sec:proof} for a proof of Theorem~\ref{thm:control_w_smp}. This result establishes that there exist infinitely many controls satisfying the memorization property for \( (\mathcal{X}, \mathcal{Y}) \), and that the collection of such controls forms a smooth Banach submanifold. This geometric structure allows us to introduce our algorithm.

\subsection{Tuning without Forgetting}

Now, we recall the Tuning without Forgetting (TwF) algorithm introduced in~\cite{bayram2024control}. We formalize it by the following definition:

\begin{definition}[Tuning without Forgetting]\label{defn:lwf} Consider an ensemble $(\mathcal{X},\mathcal{Y})$. Assume that the control $u^k$  has memorized the sub-ensemble $(\mathcal{X}^j,\mathcal{Y}^j)$ for~\eqref{eqn:control_system} for some $j<q$. If the update $\delta u^k$ satisfies the following: 
\begin{enumerate}
\item\label{itm:cond_1} $ \mathcal{J}_{j+1}(u^{k}+\delta u^k) \leq \mathcal{J}_{j+1}(u^k) $
{ \item\label{itm:cond_2} $R\left( \varphi(u^{k}+\delta u^k,x^i) \right) =y^i, \forall x^i \in \mathcal{X}^j$ }
\end{enumerate}
then the control function $u^{k+1}(:=u^k+\delta u^k)$  has been {\em tuned} for $\mathcal{X}^{j+1}$ {\em  without forgetting} $\mathcal{X}^j$.
\end{definition}

In~\cite{bayram2024control}, TwF is defined in the sense of a first-order approximation. 
More precisely, the second property of TwF requires selecting $\delta u^k$ such that 
the points in $\mathcal{X}^j$ are mapped to points whose projections onto the output subspace are within 
$o(\delta u^k)$ of their corresponding labels. In Definition~\ref{defn:lwf}, we strengthen this notion by removing the first-order approximation: thanks to Theorem~\ref{thm:control_w_smp}, the property now holds {\em exactly}.

The first property of TwF requires that the per-sample cost for any newly introduced point is nonincreasing under a control update $\delta u^k$. Therefore, we define the first-order variation of the per-sample cost functional $\mathcal{J}_{i}(u)$ at a control $u$ in $\delta u$ as $D_{\delta u} {\mathcal{J}}_{i}(u):= {\mathcal{J}}_{i}(u+\delta u)-\mathcal{J}_i(u)$. Then, we have the following: 
\begin{equation}
    D_{\delta u}{\mathcal{J}}_i(u) := R^\top (\delta  \varphi_t(u,x^{i} ))\left( R( \varphi(u,x^{i})) - y^{i} \right)
\end{equation}
where we recall that $\delta \varphi_t(u, 
 x^i)=\varphi_t(u+ \delta u, x^i)-\varphi_t(u,x^i)$.

The second property of TwF requires that the selection of $\delta u^k$ such that the end-point mapping at already learned samples is fixed. 
 To characterize such $\delta u^k$, we consider the first-order variation of end-point mapping $R(\varphi(\cdot,x^i))$ at control $u^k$ as follows: 
\begin{align}\label{eqn:variation_on_perturb}
    \!\delta R(\varphi(u^k,x^i))\!:=\!R(\varphi(u^k+\delta u^k,x^i))-R(\varphi(u^k,x^i))
\end{align}

To characterize the directions \( \delta u^k \) that keep this variation equal to zero, {we define an operator from the space of bounded functions over the time interval $[0,1]$, $v \in \cV$, to $\mathbb{R}^{n_o}$, mapping a control variation to the resulting variation in the end-point of the trajectory. Based on Lemma~\ref{lem:variation_t}, we have the following:
\begin{align}\label{eqn:operator_L}
    \mathcal{L}_{(u,x^i)}(v) = \left( \int_{0}^{t}  \Phi_{(u,x^i)}(t,\tau)  \frac{\partial f( x, u ) }{\partial u}  v(\tau)  d\tau \right)
\end{align}
where $x=\varphi_{\tau}(u,x^i)$. 
} Then, we define 
$${\ker}(u,x^i):=\operatorname{span}\{ \delta u \in \cV \mid  R( \mathcal{L}_{(u,x^i)}(\delta u))= 0 \}$$ 
be the kernel of the operator $R(\mathcal{L}_{(u,x^i)}(\cdot))$. Note that $R(\mathcal{L}_{(u,x^i)}(\cdot))=\delta R(\varphi(u,x^i))$ since \( R \) is elementwise, linear, 1-Lipschitz, and independent of \( u \).

Since $\mathcal{L}_{(u,x^i)}(\cdot)$ maps an infinite-dimensional space $\cV$ to a finite-dimensional one and $R$ has full row rank, the kernel of $R(\mathcal{L}_{(u,x^i)}(\cdot))$ is infinite-dimensional. We define the intersection of ${\ker}(u,x^i)$ for all $i \leq j$ as follows:
$$
{\ker}(u,\mathcal{X}^j):=\operatorname{span}\{ \delta u \in \cV \mid \delta u \in\bigcap_{ x^i\in \mathcal{X}^j}{\ker}{(u,x^i)} \}
$$

We define the projection of $D_{\delta u} \mathcal{J}_{j+1}(u)$ on a given subspace of functions ${\ker}(u,\mathcal{X}^j)$, denoted by $\operatorname{proj}_{{\ker}(u,\mathcal{X}^j)}D_{\delta u} \mathcal{J}_{j+1}(u)$, as the solution of the following optimization problem:
\begin{equation}\label{eqn:proj_op}
   {\arg\min}_{d(t) \in {\ker}(u,\mathcal{X}^j)} \int_0^1  | d(\tau) -  D_{\delta u} \mathcal{J}_{j+1}(u)) |^2 d\tau 
\end{equation}

From Theorem~\ref{thm:control_w_smp}, we have that $\cap_{i=1}^j U(x^i,y^i)$ is finite-codimension Banach submanifold of $\cV$. Therefore, the set ${{\ker}(u,\mathcal{X}^j)}$ is closed and continuous. Then, the projection $\operatorname{proj}_{{\ker}(u,\mathcal{X}^j)}D_{\delta u} \mathcal{J}_{j+1}(u)$ is well defined. This guarantees that the problem~\eqref{eqn:proj_op} has always a solution.


Now, we can connect the set of control functions $U(x^i,y^i)$ and the map $R(\mathcal{L}_{(u,x^i)}(\cdot))$ via its kernel: 

\begin{corollary}\label{cor:tangent_kernel}Suppose (A\ref{ass:mem}),(A\ref{ass:lpc}),(A\ref{ass:controllable}) hold. Then, the tangent space of $U(x^i,y^i)$ at a control $u^k$ is $\mathrm{ker}(u^k,x^i)$ and, 
\begin{align}
    T_{u^k}( \cap_{i=1}^j U(x^i,y^i)) = \cap_{i=1}^j \mathrm{ker}(u^k,x^i) =: \mathrm{ker}(u^k,\mathcal{X}^j).
\end{align}
\end{corollary}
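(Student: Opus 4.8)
The plan is to realize each set $U(x^i,y^i)$ as the zero level set of a smooth map and apply the Banach-space submersion (regular value) theorem. Define $G_i:\cV\to\R^{n_o}$ by $G_i(u):=R(\varphi(u,x^i))-y^i$, so that $U(x^i,y^i)=G_i^{-1}(0)$. Using the uniform bounds assumed on $\partial f/\partial x$ and $\partial f/\partial u$ along the relevant trajectories, the state-transition matrix $\Phi_{(u,x^i)}(\cdot,\cdot)$ is bounded, so $\mathcal{L}_{(u,x^i)}$ in~\eqref{eqn:operator_L} is a bounded linear operator $\cV\to\R^{\bar n}$, and the flow map $u\mapsto\varphi(u,x^i)$ is Fr\'echet differentiable with derivative given by Lemma~\ref{lem:variation_t}. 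Hence $DG_i(u)$ is the bounded linear operator $v\mapsto R\big(\mathcal{L}_{(u,x^i)}(v)\big)$, continuous in $u$, and $\ker DG_i(u)={\ker}(u,x^i)$ directly from the definition of ${\ker}(u,x^i)$.

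Next I would check that $DG_i(u^k)$ is surjective with complemented kernel at every $u^k\in U(x^i,y^i)$. Assumption (A\ref{ass:lpc}) (LCP) states that the LTV system~\eqref{eqn:defn_ltv} along $\varphi_t(u^k,x^i)$ is controllable, which is exactly surjectivity of $\mathcal{L}_{(u^k,x^i)}:\cV\to\R^{\bar n}$; composing with $R$, which has full row rank, gives surjectivity of $DG_i(u^k)$ onto $\R^{n_o}$. Its kernel ${\ker}(u^k,x^i)$ is closed (continuity of $DG_i$) and of codimension $n_o<\infty$, hence complemented in $\cV$. The submersion theorem then yields that $U(x^i,y^i)$ is a Banach submanifold with $T_{u^k}U(x^i,y^i)=\ker DG_i(u^k)={\ker}(u^k,x^i)$, which is the first claim.

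For the intersection, stack the maps: let $G:=(G_1,\dots,G_j):\cV\to(\R^{n_o})^j$, so $\bigcap_{i=1}^j U(x^i,y^i)=G^{-1}(0)$ and $DG(u^k)v=\big(R(\mathcal{L}_{(u^k,x^1)}(v)),\dots,R(\mathcal{L}_{(u^k,x^j)}(v))\big)$. The strong memorization property (A\ref{ass:controllable}) is precisely the statement that the $U(x^i,y^i)$ meet transversally at $u^k$, i.e.\ that $DG(u^k)$ is surjective (this is the mechanism already exploited in the proof of Theorem~\ref{thm:control_w_smp}). Since $\ker DG(u^k)=\bigcap_{i=1}^j\ker DG_i(u^k)$ is closed of finite codimension $\le jn_o$, hence complemented, the submersion theorem applies again and gives
\[
T_{u^k}\Big(\bigcap_{i=1}^j U(x^i,y^i)\Big)=\ker DG(u^k)=\bigcap_{i=1}^j{\ker}(u^k,x^i)=:{\ker}(u^k,\mathcal{X}^j),
\]
which is the second claim.

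The main obstacle is the surjectivity of the stacked derivative $DG(u^k)$: pairwise transversality of the $U(x^i,y^i)$ does not by itself force the joint map $G$ to be a submersion once $j\ge 3$, so one must invoke the full strength of the strong memorization property of degree $q$ --- equivalently, that the finite-dimensional ranges of the adjoints $DG_i(u^k)^{*}$ form a direct sum --- exactly as established inside the proof of Theorem~\ref{thm:control_w_smp}. A secondary technical point is confirming that $u\mapsto\varphi(u,x^i)$ is genuinely Fr\'echet (not merely G\^ateaux) differentiable with locally Lipschitz derivative, so that the submersion theorem is applicable; this is where the uniform boundedness hypotheses on $\partial f/\partial u$ and $\partial f/\partial x$ over $[0,1]$ enter.
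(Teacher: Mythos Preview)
Your argument is correct and is essentially the paper's own approach: both realize $U(x^i,y^i)$ as a level set of a submersion (you take $G_i:\cV\to\R^{n_o}$ and the preimage of $0$, the paper takes $G_{x^i}:\cV\to\R^{\bar n}$ and the preimage of the submanifold $S_i=R^{-1}(y^i)$, which is equivalent since $R$ has full row rank), invoke LCP for surjectivity of the differential, and identify the tangent space with its kernel. For the intersection the paper simply quotes that the tangent space of a transversal intersection of submanifolds equals the intersection of their tangent spaces (leaning on Theorem~\ref{thm:control_w_smp}), whereas you stack the $G_i$ into one map $G$ and reapply the submersion theorem --- a cosmetic repackaging of the same idea.
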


See Section~\ref{sec:proof} for a proof of Corollary~\ref{cor:tangent_kernel}. This result shows that any update $\delta u^k$ selected within the intersection of ${\ker}(u,x^i)$ for all $i \leq j$ inherently preserves the end-point mapping at previously learned points, since such $\delta u^k$ lies in the tangent space of $\bigcap_{i=1}^j U(x^i,y^i)$. Therefore, we propose the selection of $\delta u^k$ as the projection of $D_{\delta u}\mathcal{J}_{j+1}(u^k)$ onto $ker(u^k,\mathcal{X}^j)$, and more precisely,
\begin{align}\label{eqn:proj}
    \delta u^k := \operatorname{proj}_{{\ker}(u^k,\mathcal{X}^j)}D_{\delta u} \mathcal{J}_{j+1}(u^k)
\end{align}

Now, we can summarize these observations in the following corollary:


\begin{corollary}\label{cor:algorithm}
Suppose (A\ref{ass:mem}),(A\ref{ass:lpc}),(A\ref{ass:controllable}) hold. Let $u^k$ be a control function such that it has memorized the sub-ensemble $(\mathcal{X}^j,\mathcal{Y}^j)$. 
If $\delta u^k$ is selected as 
$$\operatorname{proj}_{\mathrm{ker}(u,\mathcal{X}^j)}D_{\delta u} \mathcal{J}_{j+1}(u),$$
then the control function $u^{k+1}(:=u^k+\delta u^k)$ has been {\em tuned} for $\mathcal{X}^{j+1}$ {\em without forgetting} $\mathcal{X}^j$. 
\end{corollary}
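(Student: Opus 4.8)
The statement to prove combines the geometric results already in hand: Theorem~\ref{thm:control_w_smp} gives that $\mathcal{M}_j:=\bigcap_{i=1}^{j}U(x^i,y^i)$ is a $C^1$ Banach submanifold of $\cV$ of finite codimension, and Corollary~\ref{cor:tangent_kernel} identifies $T_{u}\mathcal{M}_j=\ker(u,\mathcal{X}^j)$ for every $u\in\mathcal{M}_j$. The plan is to verify the two clauses of Definition~\ref{defn:lwf}. First I would record that the hypothesis ``$u^k$ has memorized $(\mathcal{X}^j,\mathcal{Y}^j)$'' means exactly $u^k\in U(x^i,y^i)$ for all $i\le j$, i.e. $u^k\in\mathcal{M}_j$, so $T_{u^k}\mathcal{M}_j$ is defined and equals $\ker(u^k,\mathcal{X}^j)$; and that the update direction $\delta u^k=\operatorname{proj}_{\ker(u^k,\mathcal{X}^j)}D_{\delta u}\mathcal{J}_{j+1}(u^k)$ is, by construction and by the well-posedness of the projection~\eqref{eqn:proj_op} (closedness of $\ker(u^k,\mathcal{X}^j)$, a consequence of Theorem~\ref{thm:control_w_smp}), an element of $T_{u^k}\mathcal{M}_j$.

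For clause~\ref{itm:cond_2} I would pass from a tangent step to an exact deformation: since $\mathcal{M}_j$ is a $C^1$ Banach submanifold, the tangent vector $\delta u^k\in T_{u^k}\mathcal{M}_j$ is the initial velocity of a $C^1$ curve $s\mapsto u(s)$ lying entirely in $\mathcal{M}_j$ with $u(0)=u^k$ — obtained by transporting the segment $s\mapsto u^k+s\,\delta u^k$ through a submanifold chart, equivalently via the local graph of $\mathcal{M}_j$ over its tangent space. Because $u(s)\in\mathcal{M}_j=\bigcap_{i=1}^{j}U(x^i,y^i)$ for all $s$, one has $R(\varphi(u(s),x^i))=y^i$ for every $x^i\in\mathcal{X}^j$ \emph{exactly}, so the end-point mapping at the previously learned points is preserved along the whole update; taking $u^{k+1}$ to be $u(s)$ for a suitably small $s$ (consistently with the continuation/deformation viewpoint of the abstract, the net displacement $u^{k+1}-u^k=s\,\delta u^k+o(s)$ then points along $T_{u^k}\mathcal{M}_j$) gives clause~\ref{itm:cond_2} without first-order approximation. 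For clause~\ref{itm:cond_1} I would invoke the projected-gradient argument on the manifold: restricted to $\mathcal{M}_j$, the first-order variation of $\mathcal{J}_{j+1}$ at $u^k$ along $d\in T_{u^k}\mathcal{M}_j$ is $\langle d,\,D_{\delta u}\mathcal{J}_{j+1}(u^k)\rangle_{L^2}=\langle d,\operatorname{proj}_{T_{u^k}\mathcal{M}_j}D_{\delta u}\mathcal{J}_{j+1}(u^k)\rangle_{L^2}$, so moving along (the negative of) $\delta u^k$ strictly decreases $\mathcal{J}_{j+1}$ to first order whenever the projection is nonzero and is non-increasing otherwise; a sufficiently small step along the curve $u(s)$ thus yields $\mathcal{J}_{j+1}(u^{k+1})\le\mathcal{J}_{j+1}(u^k)$. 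Hence $u^{k+1}$ is tuned for $\mathcal{X}^{j+1}$ without forgetting $\mathcal{X}^j$, and since the new control again lies in $\mathcal{M}_{j+1}$, which is again a $C^1$ Banach submanifold by Theorem~\ref{thm:control_w_smp}, the property propagates along the sequential training.

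I expect the main obstacle to be the precise passage from the literal tangent step ``$u^k+\delta u^k$'' to an \emph{exact} mapping-preserving update: a straight affine step leaves $\mathcal{M}_j$ unless the submanifold is flat, so the honest statement is the continuous-deformation one — a curve inside $\mathcal{M}_j$ with prescribed initial velocity $\delta u^k$ — and the argument must make explicit that such a curve exists, which is exactly where the $C^1$ submanifold charts from Theorem~\ref{thm:control_w_smp} and the closedness/complementation of $\ker(u^k,\mathcal{X}^j)$ are essential, and that a reparametrization reconciles it with Definition~\ref{defn:lwf}. The remaining points, namely the descent guarantee in clause~\ref{itm:cond_1} (a step-size/line-search argument with the correct sign convention for $D_{\delta u}\mathcal{J}_{j+1}$) and the identification $T_{u^k}\mathcal{M}_j=\ker(u^k,\mathcal{X}^j)$, are routine given Corollary~\ref{cor:tangent_kernel}.
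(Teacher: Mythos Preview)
The paper does not supply a formal proof of this corollary; it is stated as an immediate consequence of Theorem~\ref{thm:control_w_smp} and Corollary~\ref{cor:tangent_kernel}, with the preceding paragraph simply asserting that ``any update $\delta u^k$ selected within the intersection of $\ker(u,x^i)$ for all $i\le j$ inherently preserves the end-point mapping at previously learned points, since such $\delta u^k$ lies in the tangent space of $\bigcap_{i=1}^j U(x^i,y^i)$.'' Your proposal correctly formalizes exactly this implicit argument via Theorem~\ref{thm:control_w_smp} and Corollary~\ref{cor:tangent_kernel}, so the approach is the same.

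Where you go beyond the paper is in flagging --- and resolving --- the point the paper glosses over: a straight affine step $u^k+\delta u^k$ along a tangent vector need not land on the (generally curved) submanifold $\mathcal{M}_j$, so clause~\ref{itm:cond_2} of Definition~\ref{defn:lwf} in its literal form is not satisfied \emph{exactly} by a pure tangent step. Your fix, replacing the affine step by a $C^1$ curve in $\mathcal{M}_j$ with initial velocity $\delta u^k$ obtained from a submanifold chart, is the honest statement and is precisely what the paper's ``continuation/deformation'' language in the abstract and introduction points to without spelling out. Your projected-gradient argument for clause~\ref{itm:cond_1} with a small step size is likewise the standard completion the paper omits. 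In short, your proof is the paper's intended argument, but made rigorous where the paper remains informal.
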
 

Paraphrasing the statement says that, under the given assumptions, a control $u^j$ that has memorized the sub-ensemble $(\mathcal{X}^j,\mathcal{Y}^j)$ can be continuously perturbed to a control $u^{j+1}$ that has memorized the sub-ensemble $(\mathcal{X}^{j+1},\mathcal{Y}^{j+1})$ without leaving the set $\bigcap_{i=1}^j U(x^i,y^i)$ (i.e. without forgetting the sub-ensemble ($\mathcal{X}^j,\mathcal{Y}^j)$).  Starting from an empty ensemble \( (\mathcal{X}^0,\mathcal{Y}^0) \) and iteratively adding points one by one, this procedure enables an iterative training of neural ODEs that successively memorizes the full dataset.

    In our previous work~\cite{bayram2024control}, we provided a numerical algorithm implementing this approach along with experimental validation; for details on the algorithm and its empirical performance, we refer the reader to Section 4 of~\cite{bayram2024control}. Also, in~\cite{bayram2025control}, we developed a numerical algorithm that employs TwF for control disturbance rejection, ensuring that the end-point mapping is preserved even under bounded disturbances on the control function. This letter complements these contributions by providing a theoretical foundation for the TwF principle.

\section{Proof of the Main Theorem}\label{sec:proof}

\paragraph{Overview of the Proof:} First, we consider the space of controls that memorizes a given ensemble. Under Assumption~\ref{ass:mem}, we show that $\cap_{i=1}^q U(x^i,y^i)$ is non-empty. Then, we discuss its geometric properties. We define a map that sends a control $u$ to the solution of the system~\eqref{eqn:control_system} at a time $1$ from initial point $x^i$. We show that, under LCP (see Assumption~\ref{ass:lpc}), this map is a submersion from $\cV$ to $\mathbb{R}^{\bar{n}}$. Then, the set of controls that memorizes a pair $(x^i,y^i)$ (i.e. $U(x^i,y^i)$), is a Banach submanifold of $\cV$ from Regular Value Theorem in infinite-dimensional spaces. Then, under the strong memorization property (see Assumption~\ref{ass:controllable}), we show that the intersection of the set of controls that memorizes all the pairs in the ensemble, $\cap_{i=1}^q U(x^i,y^i)$, is also Banach submanifold of $\cV$. Then, we prove that the intersection of the kernel of the map $R(\mathcal{L}_{(u,x^i)}(\cdot))$ for all $x^i \in \mathcal{X}^j$ at a given control $u$ is the tangent space of the intersection of the set of the controls, $\cap_{i=1}^j U(x^i,y^i)$, at a given control $u$. Then, the projection guarantees that the gradient flow is restricted to the submanifold of controls $\cap_{i=1}^j U(x^i,y^i)$ for $1\leq j \leq q$.

\subsection{Geometry of $U(x^i,y^i)$} 

In this subsection, we discuss the geometry of the set $U(x^i,y^i)$ in the view of the following proposition. 

\begin{proposition}\label{prop:one_w_smp} Let $u \in U(x^i,y^i)$ and assume that model~\eqref{eqn:control_system} has the LCP at $x^i$. Then $U(x^i,y^i)$ is a Banach submanifold of $\cV$ of finite-codimension.
\end{proposition}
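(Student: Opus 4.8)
The plan is to realize $U(x^i,y^i)$ as the preimage of a submanifold under a smooth map and then invoke the Regular Value / Preimage Theorem for Banach spaces. First I would define the endpoint evaluation map
\begin{equation}
    \Psi^i : \cV \to \mathbb{R}^{\bar n}, \qquad \Psi^i(u) := \varphi(u, x^i),
\end{equation}
so that $U(x^i,y^i) = \Psi^i{}^{-1}\!\left(R^{-1}(y^i)\right) = \Psi^i{}^{-1}(S_i)$. Since $R$ is linear of full row rank, $S_i = R^{-1}(y^i)$ is an affine subspace of $\mathbb{R}^{\bar n}$ of dimension $\bar n - n_o$, hence a closed embedded submanifold of finite codimension $n_o$. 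The strategy is therefore: (i) show $\Psi^i$ is $C^1$ (indeed smooth) with derivative given by the variational formula of Lemma~\ref{lem:variation_t}; (ii) show that at every $u \in U(x^i,y^i)$ the map $\Psi^i$ is transverse to $S_i$, which under LCP will reduce to $\Psi^i$ being a submersion at $u$; (iii) conclude via the infinite-dimensional preimage theorem that $U(x^i,y^i)$ is a Banach submanifold of codimension $n_o$.

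For step (i), differentiability of $u \mapsto \varphi(u,x^i)$ on $L^\infty([0,1],\mathbb{R}^p)$ follows from smoothness of $f$ together with the uniform bounds on $\partial f/\partial x$ and $\partial f/\partial u$ assumed after~\eqref{eqn:control_system}; by Lemma~\ref{lem:variation_t} the Fréchet derivative is exactly the bounded linear operator
\begin{equation}
    D\Psi^i(u)[v] = \mathcal{L}_{(u,x^i)}(v) = \int_0^1 \Phi_{(u,x^i)}(1,\tau)\, \frac{\partial f}{\partial u}\Big|_{(\varphi_\tau(u,x^i),u(\tau))} v(\tau)\, d\tau .
\end{equation}
For step (ii), LCP at $x^i$ says precisely that the LTV system~\eqref{eqn:defn_ltv} linearized along $\varphi_t(u,x^i)$ is controllable on $[0,1]$; controllability of that LTV system is equivalent to surjectivity of the operator $v \mapsto \mathcal{L}_{(u,x^i)}(v)$ from $\cV$ onto $\mathbb{R}^{\bar n}$ (its controllability Gramian is nonsingular, so one can even exhibit an explicit right inverse). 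Hence $D\Psi^i(u)$ is surjective for every $u \in U(x^i,y^i)$. Moreover its kernel $\ker(u,x^i)$ is closed (being the kernel of a bounded operator) and, since $\cV$ is a Banach space and the operator has finite-dimensional range $\mathbb{R}^{\bar n}$, the kernel is complemented (choose a right inverse; its image is a closed complement of finite dimension $\bar n$). Thus $\Psi^i$ is a submersion at $u$ in the strong sense required for the Banach manifold version of the implicit function theorem.

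Combining these, $\Psi^i$ is transverse to the affine submanifold $S_i$ at every point of $U(x^i,y^i)$: indeed $D\Psi^i(u)(\cV) = \mathbb{R}^{\bar n} \supseteq T_{\Psi^i(u)} S_i$, trivially. The infinite-dimensional preimage theorem (e.g. Lang, \emph{Differential and Riemannian Manifolds}, or Abraham--Robbin) then gives that $U(x^i,y^i) = (\Psi^i)^{-1}(S_i)$ is a closed $C^\infty$ Banach submanifold of $\cV$ whose codimension equals that of $S_i$ in $\mathbb{R}^{\bar n}$, namely $n_o$, and whose tangent space at $u$ is $(D\Psi^i(u))^{-1}(T_{\Psi^i(u)} S_i)$; since $S_i$ is affine this is $(D\Psi^i(u))^{-1}(\ker R) = \{ v : R\,\mathcal{L}_{(u,x^i)}(v) = 0\} = \ker(u,x^i)$, which records Corollary~\ref{cor:tangent_kernel} for a single sample. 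The main obstacle I anticipate is not any single step but marshalling the functional-analytic hypotheses cleanly: one must make sure the derivative of the flow with respect to an $L^\infty$ control is genuinely Fréchet (not merely Gateaux) and bounded — this is where the uniform boundedness of $\partial f/\partial x$ and $\partial f/\partial u$ is essential — and one must verify that surjectivity of $D\Psi^i(u)$ yields a \emph{split} surjection so that the Banach submanifold theorem applies; the finite dimensionality of the target $\mathbb{R}^{\bar n}$ is what makes the splitting automatic, so the argument goes through without further regularity assumptions.
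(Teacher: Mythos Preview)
Your proposal is correct and follows essentially the same route as the paper: define the endpoint map $u\mapsto\varphi(u,x^i)$ into $\mathbb{R}^{\bar n}$, use LCP to see its differential $\mathcal{L}_{(u,x^i)}$ is surjective (via invertibility of the controllability Gramian, yielding an explicit right inverse), and then invoke the Banach-space regular value/preimage theorem with $S_i=R^{-1}(y^i)$ to conclude $U(x^i,y^i)$ is a submanifold of codimension $n_o$. Your additional remarks on Fr\'echet differentiability and automatic splitting (finite-dimensional target) are useful hygiene but do not depart from the paper's argument.
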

To prove Proposition~\ref{prop:one_w_smp}, we use the following definitions and theorems. We define the map $G_{x^i}$ for a given initial point $x^i \in \mathcal{X}$ as follows:
\begin{align}\label{eqn:defn_G}
    {G}_{x^i}: \cV \to \mathbb{R}^{\bar{n}} : u \mapsto G_{x^i}(u):=\varphi(u,x^i).  \\[-1.8em]   
\end{align}
In words, the map $G$ sends a control $u$ to the solution at time $1$ of~\eqref{eqn:control_system} with control $u$ and initial point $x^i$. The first notion we need to build upon for Regular value theorem in Banach spaces is submersion. 

{Here, we present an adapted definition of submersion to our context from ~\cite[Chapter 2]{lang2012fundamentals} and~\cite{glockner2015fundamentals}. Let $M$ and $N$ be two $C^r$ smooth manifolds. 

\begin{definition}[Submersion]\label{defn:submersion}
We say that a map $G: M \to N$ is a submersion if for each $m \in M$, there exists open sets $\mathcal{O}_m \subset M$ and $\mathcal{O}_{G(m)} \subset N$, containing $m$ and $G(m)$, and changes of variables $\psi_1:\mathcal{O}_m  \to M $ and $\psi_2:\mathcal{O}_{G(m)} \to N $ with the property that  
\begin{align}\label{eqn:submersion}
    \psi_1 \circ G \circ \psi_2^{-1}
\end{align}
admits a continuous linear right inverse.
\end{definition}

The condition for submersion in Definition~\ref{defn:submersion} is generally difficult to verify. Hence, we provide a simplified version that is relatively easier to check and demonstrate that, under certain conditions, the two are equivalent.

\begin{theorem}\label{thm:naive_in_finite}
Let $G: M \to N$ be a smooth map between $C^r$-manifolds modelled on locally convex topological $\mathbb{R}$-vector spaces. If $N$ is a Banach manifold and $r \geq 2$ or $N$ is finite-dimensional, then $G$ is a $C^r$ submersion if, and only if, for each $m \in M$, the continuous linear map $T_{m}G:T_m M\to T_{G(m)} N$ is surjective. 
\end{theorem}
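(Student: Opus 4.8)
The plan is to prove the two implications separately; the forward one is essentially formal, while the reverse one is where the hypotheses on $N$ and on the order $r$ do their work. For the forward direction, that a submersion has surjective differential at every point is a local statement and needs no hypothesis on $N$: fixing $m\in M$ and the charts $\psi_1,\psi_2$ of Definition~\ref{defn:submersion}, the local representative of $G$ in these charts is a $C^r$ map admitting a continuous linear right inverse, so differentiating the relation stating that the local representative precomposed with that right inverse is the identity exhibits the differential of the local representative (at the relevant point) as a continuous linear surjection, which moreover splits; since $D\psi_1$ and $D\psi_2$ are topological linear isomorphisms, $T_mG$ is the composite of these isomorphisms with that surjection and is therefore surjective.

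For the reverse direction I would argue locally and reduce everything to a straightening statement. Passing to charts, $G$ becomes a $C^r$ map $g$ from a neighbourhood of $0$ in the model space $E$ of $M$ into the model space $F$ of $N$, with $g(0)=0$ and $A:=Dg(0)\colon E\to F$ a continuous linear surjection. First I would split $\ker A$ off $E$: when $N$ is finite dimensional this is automatic, since $\ker A=A^{-1}(0)$ is closed of finite codimension $\dim F$, so a complement $E_1$ may be taken as the span of preimages under $A$ of a basis of $F$, with the associated projection $(A|_{E_1})^{-1}\circ A$ continuous; in the Banach case the stated hypotheses supply a closed complement as well. Writing $E=E_0\oplus E_1$ with $E_0=\ker A$ and $A|_{E_1}\colon E_1\xrightarrow{\ \sim\ }F$, and decomposing $x=(x_0,x_1)$ accordingly, I would introduce the $E_0\times F$-valued $C^r$ map $H(x_0,x_1):=(x_0,\,g(x_0,x_1))$, whose differential at $0$ is block triangular with $\mathrm{id}_{E_0}$ and $A|_{E_1}$ on the diagonal, hence a topological linear isomorphism. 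The inverse function theorem of the underlying $C^r$-calculus (see~\cite{lang2012fundamentals,glockner2015fundamentals}) then makes $H$ a local $C^r$-diffeomorphism; composing the chart on $M$ with $H$ turns $G$ into the canonical projection $E_0\times F\to F$, which admits the continuous linear right inverse $y\mapsto(0,y)$. This is exactly the normal form demanded by Definition~\ref{defn:submersion}, so $G$ is a submersion.

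I expect the inverse function theorem invoked in the second step to be the main obstacle. In the Michal--Bastiani $C^r$-calculus on general locally convex model spaces no inverse function theorem is available, which is precisely why the statement restricts to $N$ finite dimensional---where solving $g(x_0,\cdot)=y$ is a finite-dimensional implicit-function problem with parameter $x_0\in E_0$---or to $N$ Banach with $r\ge 2$, the latter being the differentiability threshold at which the relevant inverse/implicit function theorem of \cite{glockner2015fundamentals} applies and at which the complement of $\ker A$ can be controlled. Everything else---the chain-rule bookkeeping in the forward direction and the block-triangular linear algebra in the reverse direction---is routine, so I anticipate the actual write-up to be short: it should amount to matching Definition~\ref{defn:submersion} to the projection normal form and citing the appropriate version of the inverse function theorem from \cite{glockner2015fundamentals} rather than re-deriving it.
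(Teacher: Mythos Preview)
Your outline is correct and follows the standard route---split the kernel of the differential, then invoke an inverse/implicit function theorem to straighten the local representative to a projection---and you correctly isolate the inverse function step as the place where the hypotheses on $N$ and $r$ are actually used. Note, however, that the paper does not prove this theorem at all: it simply cites \cite[Thm~A]{glockner2015fundamentals} and moves on. So there is no in-paper argument to compare against; your sketch is essentially what one would find behind that citation, and your reading of the two cases (finite-dimensional $N$ giving a finite-dimensional implicit function problem with locally convex parameter; Banach $N$ with $r\ge 2$ invoking the implicit function theorem of \cite{glockner2015fundamentals}) matches the structure of Gl\"ockner's proof.
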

See \cite[Thm A]{glockner2015fundamentals} for a proof of Theorem~\ref{thm:naive_in_finite}. Now, we can state the regular value theorem in Banach Spaces.

\begin{theorem}[Regular Value Theorem in Banach Spaces]\label{thm:rvt}
    Let $G: M \to N $ be a $C^r$ $\mathbb{R}$-submersion between $C^r$ $\mathbb{R}$-manifolds modelled on locally convex topological $\mathbb{R}$-vector spaces. Let $S$ be a submanifold of $N$. If $S$ has finite codimension $k$ in $N$, then $G^{-1}(S)$ has codimension $k$ in $M$.
\end{theorem}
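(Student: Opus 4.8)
The plan is to reduce the statement to the classical submersion/preimage theorem by working in the model spaces. Since submersion is a local property (Definition~\ref{defn:submersion}), it suffices to verify the codimension claim locally around an arbitrary point $m_0 \in G^{-1}(S)$, using charts supplied by the submersion structure of $G$ and by the submanifold structure of $S$. So first I would fix $m_0 \in G^{-1}(S)$, set $n_0 = G(m_0) \in S$, and choose the chart $\psi_2 : \mathcal{O}_{n_0} \to N_0$ (the model space of $N$) adapted to $S$, i.e. such that $\psi_2(S \cap \mathcal{O}_{n_0}) = F \cap \psi_2(\mathcal{O}_{n_0})$ where $F \subset N_0$ is a closed linear subspace of finite codimension $k$, with a closed complement $C$ of dimension $k$ so that $N_0 = F \oplus C$.

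Next I would use the submersion property of $G$ at $m_0$: shrinking neighborhoods if necessary, there are charts $\psi_1 : \mathcal{O}_{m_0} \to M_0$ and a (possibly re-chosen) $\psi_2 : \mathcal{O}_{n_0} \to N_0$ such that the local representative $g := \psi_2 \circ G \circ \psi_1^{-1}$ admits a continuous linear right inverse at the relevant point; in the standard normal form for submersions of locally convex manifolds, one may further arrange a splitting $M_0 = K \oplus N_0$ (with $K$ a closed complement of the ``image direction'') in which $g$ is, up to a local diffeomorphism, the projection $(a,b) \mapsto b$. The subtlety here is that re-choosing $\psi_2$ to get the submersion normal form may clash with the $\psi_2$ chosen to straighten $S$; I would handle this by first putting $G$ in normal form and then observing that any chart of $N$ can be post-composed with a local diffeomorphism straightening $S$ without destroying the existence of a continuous linear right inverse — both operations only compose $g$ with local diffeomorphisms, and the class of maps admitting a continuous linear right inverse is stable under such composition. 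Concretely, after these reductions, in suitable charts $G$ is represented near $m_0$ by the projection $\pi : K \oplus N_0 \to N_0$, and $S$ is represented by $K \oplus F$ inside $K \oplus N_0$.

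Then the computation is immediate: $G^{-1}(S)$ is locally represented by $\pi^{-1}(F) = K \oplus F$, which is a closed linear subspace of $M_0 = K \oplus N_0$ whose complement is $\{0\} \oplus C \cong C$, of dimension $k$. Hence $G^{-1}(S)$ has codimension $k$ in $M$ at $m_0$; since $m_0$ was arbitrary this establishes the claim. The one point requiring care — and what I expect to be the main obstacle — is justifying the submersion normal form $g \simeq \pi$ in the locally convex (not merely Banach) category: in infinite dimensions one cannot invoke the inverse function theorem freely, so I would cite the relevant structure theorem for $C^r$ submersions between manifolds modelled on locally convex spaces from~\cite{glockner2015fundamentals} (the same source underlying Theorem~\ref{thm:naive_in_finite}), which guarantees exactly that a submersion is, in suitable charts, a projection along a complemented subspace. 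Granting that, the finite-codimension bookkeeping with the splitting $N_0 = F \oplus C$ is routine linear algebra and the proof concludes.
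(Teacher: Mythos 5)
The paper does not actually prove this statement: Theorem~\ref{thm:rvt} is quoted verbatim from the literature, and the text immediately following it simply refers the reader to \cite[Thm~C]{glockner2015fundamentals}. Your sketch is therefore not comparable to a proof ``in the paper''; rather, it is a correct reconstruction of the standard argument behind that citation. The structure is right: with the definition of submersion used here, $G$ is already, in suitable charts, (the restriction of) a continuous linear map $\lambda : M_0 \to N_0$ admitting a continuous linear right inverse $\sigma$, so $M_0 = \ker\lambda \oplus \sigma(N_0)$ and $G$ is locally the projection onto the second factor; straightening $S$ as $F \cap \psi_2(\mathcal{O}_{n_0})$ with $N_0 = F \oplus C$, $\dim C = k$ (a closed subspace of finite codimension in a locally convex space is automatically complemented, by the same quotient/lifting argument the paper uses via \cite[Lemma 4.21]{rudin1964principles}), one gets $G^{-1}(S)$ locally equal to $\ker\lambda \oplus F$, whose complement is $C$. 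The one step you state imprecisely is the chart-compatibility fix: post-composing the codomain chart with a local diffeomorphism $\phi$ that straightens $S$ destroys linearity of the local representative, so ``stability under composition'' is not literally enough; you must simultaneously pre-compose the domain chart with $\mathrm{id}_{\ker\lambda} \times \phi$ (transported through the splitting) so that the local representative remains the linear projection. With that standard adjustment, and granting the submersion normal form from \cite{glockner2015fundamentals} exactly as you propose, the argument closes.
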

See~\cite[Thm C]{glockner2015fundamentals} for a proof of Theorem~\ref{thm:rvt}. Figure~\ref{fig:proof_visualization} provides an intuitive illustration of the proof of Proposition~\ref{prop:one_w_smp}, showing the relationships between the sets and spaces involved. Now, we are in a position to prove Proposition~\ref{prop:one_w_smp}.

 \begin{figure}
     \centering
     \includegraphics[width=0.8\linewidth]{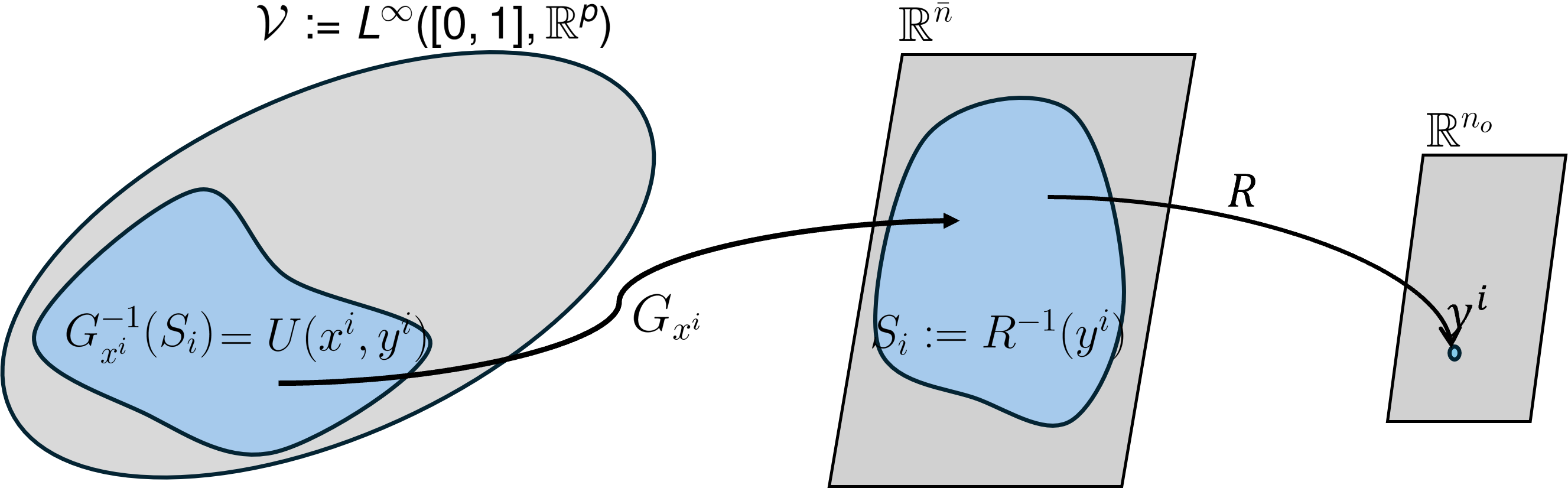}
\caption{
Visualization of the proof approach. This figure illustrates the key spaces and mappings used in the proof. The space of essentially bounded functions, $\mathcal{V}$, is mapped into $\mathbb{R}^{\bar{n}}$ via the submersion $G_{x^i}(\cdot)$, which represents the solution of the model at time 1 starting from $x^i$. The set of control functions that satisfy the end-point condition is given by the preimage $G_{x^i}^{-1}(S_i)$, where $S_i$ is the preimage of the point $y_i$ under the readout map $R$.
}
     \label{fig:proof_visualization}
 \end{figure}

\begin{proof}[Proof of Proposition~\ref{prop:one_w_smp}] First, we show that $G_{x^i}$ is submersion from $\cV$ to $\mathbb{R}^{\bar{n}}$. Then, we apply Regular value theorem in Banach spaces to show that $G_{x^i}^{-1}(\cdot)$ is a Banach submanifold $\cV$ with finite-codimension.

 Let $z_1 \in \mathbb{R}^{\bar{n}}$. We need to show that there exists a linear continuous map $\eta(z_1)$ so that $dG_{x^i}(\eta(z_1))=z_1$ to show that $G_{x^i}$ is a submersion (see Definition~\ref{defn:submersion}). One can see that the differential $dG_{x^i}\lvert_u(v)$, which entails the first-order variation of $G_{x^i}$ at $u$, that is, $$dG_{x^i}\lvert_u(v):=G_{x^i}(u+v)-G_{x^i}(u)$$ is the solution of~\eqref{eqn:defn_ltv} at time $1$. Then, we have:
\begin{equation}\label{eqn:dG_defn}
dG_{x^i}\lvert_u(v) = \int_0^1 \Phi_{(u,x^i)}(1,s) B(s) v(s) ds.    
\end{equation}
where $B(s)=\frac{\partial f(x,u)}{\partial u}|_{x=\varphi_s(u,x^i)}$.
Then, from the linearized controllability property of the system~\eqref{eqn:control_system} at $x^i$, we know that the LTV system~\eqref{eqn:defn_ltv} is controllable. This implies that the controllability Gramian of~\eqref{eqn:defn_ltv}, denoted by $W(0,1)$, is full-rank~\cite{brockett2015finite}. Let $\chi(z_1)=-W(0,1)^{-1}\Phi_{(u,x^i)}(0,1)z_1$, and
$$\eta_{z_1}(s)=-B^\top(s)\Phi_{(u,x^i)}^\top(0,s)\chi(z_1).$$ One can also see that $\eta_{z_1}$ is continuous and linear in $z_1$. When, we plug these two into~\eqref{eqn:dG_defn}, we obtain:
    \begin{align}
        dG_{x^i}\lvert_u(\eta_{z_1}) &= \int_0^1 \Phi_{(u,x^i)}(1,s)B(s)\eta_{z_1}(s) ds \\
                    &= -\Phi_{(u,x^i)}(1,0) W(0,1) \chi(z_1) = z_1
    \end{align}
This shows that $\eta_{z_1}(s)$ is the linear continuous right-inverse of $dG_{x^i}$ at $z_1$, implying that $dG_{x^i}$ is surjective (i.e. every vector in $\mathbb{R}^{\bar{n}}$ can be reached by some element in the range of $dG_{x^i}$). Note that the codomain of $G_{x^i}$ is $\mathbb{R}^{\bar{n}}$, which is a finite-dimensional Banach manifold. The map $G_{x^i}$ is defined as a flow map of an ODE; it inherits the smoothness of the vector field generating the flow. Then, from Theorem~\ref{thm:naive_in_finite}, the map $G_{x^i}$ is submersion.

Recall the definition of $U(x^i,y^i)$: 
\begin{align}
U(x^i,y^i):&=\{ u \in \cV \mid \varphi(u,x^i) \in R^{-1}(y^i) \} 
\end{align}
By substituting $G_{x^i}(u) := \varphi(u,x^i)$ and $S_i := R^{-1}(y^i)$, we obtain
\begin{align}
    U(x^i,y^i)  &=\{ u \in \cV \mid G_{x^i}(u) \in S_i\}
\end{align}
Hence, we have $U(x^i,y^i)=G_{x^i}^{-1}(S_i)$. From Theorem~\ref{thm:rvt}, we can conclude that $G_{x^i}^{-1}(S_i)$ is a Banach submanifold of $\cV$ of codimension equal to the codimension of $S_i$ in $\mathbb{R}^{\bar{n}}$, (i.e. $\bar{n}-\dim S_i$). Note that, by definition, the Jacobian of $R$ is of full row rank. From~\cite{guillemin2010differential}, we have that $\dim S_i=\bar{n}-n_o$ as a result of regular value theorem. Then, we have $\mathrm{codim}(U(x^i,y^i))=n_o$. This concludes the proof. \qed
\end{proof}

We can now complete the proof of Corollary~\ref{cor:tangent_kernel}. 
\begin{proof}[Proof of Corollary~\ref{cor:tangent_kernel}]
    Recall that $U(x^i,y^i)=G^{-1}_{x^i}(S_i)$. From~\eqref{eqn:dG_defn}, it follows that $T_{u^k}(U(x^i,y^i))= \mathrm{ker} R(dG_{x^i}\lvert_{u^k})$. Since the tangent space of an intersection of submanifolds is the intersection of their tangent spaces, the result follows immediately. \qed
\end{proof}

\subsection{Differentiable Properties of $\cap_{i=1}^q U(x^iy^i)$}

Until now, we have only considered the memorization property (A\ref{ass:mem}) and the LCP assumption (A\ref{ass:lpc}) for the proof of Theorem~\ref{thm:control_w_smp}. Now, we take the advantage of strong memorization property to discuss the intersection of the set of controls $U(x^i,y^i)$ for all $i \in \mathcal{I}$. It is important to note that the transversal intersection of infinite-dimensional Banach spaces does not necessarily yield a Banach submanifold, whereas the transversal intersection of finite-dimensional Banach manifolds does~\cite{guillemin2010differential}. Therefore, we need to further show that under the strong memorization property, the intersection of \( U(x^i, y^i) \) for $i=1,2,\cdots,q$ is also a Banach submanifold of $\cV$ with finite codimension.

A closed subspace $M_1$ of the Banach space $M$ is said to be split (or complemented) in $M$, if there is a closed subspace $\bar{M}_1$ such that $M = M_1 \oplus \bar{M}_1$ and $M_1 \cap \bar{M}_1 = \{0\}$. With this definition, we can present a corollary of Transversal Mapping Theorem, adapted to our context.

\begin{theorem}[Transversal Mapping Theorem]\label{thm:splits}
    Let $M_1$ and $M_2$ be submanifolds of $M$. Suppose that:
    \begin{enumerate}[label=(\roman*)]
        \item\label{itm:transversal} $T_m M_1 + T_m M_2 = T_m M$ for all $m \in M_1 \cap M_2$.
        \item\label{itm:split} $T_m M_1 \cap T_m M_2$ splits in $T_m M$ for all $m \in M_1 \cap M_2$. 
    \end{enumerate}
    If both $M_1$ and $M_2$ have finite codimension in $M$, then 
    \[
    \mathrm{codim}(M_1 \cap M_2) = \mathrm{codim}(M_1) + \mathrm{codim}(M_2).
    \]
\end{theorem}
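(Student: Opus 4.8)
\textbf{Proof plan for Theorem~\ref{thm:splits} (Transversal Mapping Theorem).}

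The plan is to reduce the global statement to a local, linear-algebraic fact about the ambient model space and then propagate it through the submanifold charts. First I would fix a point $m\in M_1\cap M_2$ and pass to submanifold charts: since $M_1$ has finite codimension $k_1$ in $M$, there is a chart $\psi_1$ near $m$ in which $M_1$ becomes (locally) a closed split subspace $V_1\subset E$ of codimension $k_1$, where $E$ is the model space of $M$; similarly a chart for $M_2$ gives a closed split subspace $V_2$ of codimension $k_2$. The transversality hypothesis~\ref{itm:transversal} says $V_1+V_2=E$, and hypothesis~\ref{itm:split} says $V_1\cap V_2$ splits in $E$. The key linear fact I would establish is: under these two conditions, $\dim(E/(V_1\cap V_2))=k_1+k_2$, i.e. $V_1\cap V_2$ has finite codimension $k_1+k_2$ in $E$. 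This is the standard ``dimension formula'' $\operatorname{codim}(V_1\cap V_2)=\operatorname{codim}(V_1)+\operatorname{codim}(V_2)-\operatorname{codim}(V_1+V_2)$ specialized to $V_1+V_2=E$; in the Banach setting it follows from the short exact sequence $0\to E/(V_1\cap V_2)\to E/V_1\oplus E/V_2\to E/(V_1+V_2)\to 0$, whose exactness on the right is exactly transversality and which is a sequence of finite-dimensional spaces once $k_1,k_2<\infty$.

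Next I would promote this linear statement to a manifold statement, i.e. show $M_1\cap M_2$ is genuinely a submanifold of $M$ with model space (locally) $V_1\cap V_2$ and that its codimension is $k_1+k_2$. Here I would invoke the split hypothesis~\ref{itm:split} crucially: to build a submanifold chart for $M_1\cap M_2$ one needs $T_m(M_1)\cap T_m(M_2)=V_1\cap V_2$ to be a split (complemented) subspace of $E$, which hypothesis~\ref{itm:split} grants; this is precisely the point where infinite-dimensional Banach geometry differs from the finite-dimensional case (in finite dimensions every subspace splits automatically, which is why the remark in the text emphasizes the need for the extra hypothesis). With $V_1\cap V_2$ split, one constructs, in a small enough chart, a local product decomposition $E\cong (V_1\cap V_2)\oplus C$ with $\dim C=k_1+k_2$, in which $M_1\cap M_2$ is the slice $\{c=0\}$; standard implicit-function-theorem / submersion arguments (of the same flavor used for Proposition~\ref{prop:one_w_smp}) make this precise. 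One also checks the charts overlap smoothly, so $M_1\cap M_2$ is a $C^r$ Banach submanifold. Since the local model complement has dimension $k_1+k_2$ at every point, $\operatorname{codim}(M_1\cap M_2)=k_1+k_2=\operatorname{codim}(M_1)+\operatorname{codim}(M_2)$, which is the claimed formula.

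The main obstacle I anticipate is the chart-compatibility / straightening step: getting the two separate submanifold charts for $M_1$ and $M_2$ to be simultaneously linearized around the common point $m$ so that $M_1$, $M_2$, and $M_1\cap M_2$ all appear as (translated) linear subspaces of a single model space. One cannot in general straighten both submanifolds with a single chart; instead I would argue with nested charts — first linearize $M_1$, then, within that chart, linearize the image of $M_2$ — and carefully track that the split condition~\ref{itm:split} is preserved under these coordinate changes (it is, since linear isomorphisms preserve complemented subspaces). Once the simultaneous local picture $V_1+V_2=E$, $V_1\cap V_2$ split, $k_i<\infty$ is in hand, the codimension count is the short-exact-sequence computation above and the rest is routine. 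A secondary technical point, which I would state but not belabor, is that finite codimension plus the transversality relation forces $V_1+V_2$ to be closed (it equals $E$), so no separate closedness argument is needed here — this is the convenience that the finite-codimension hypothesis buys us.
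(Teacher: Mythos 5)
The paper does not prove this statement itself; it cites \cite{abraham2012manifolds} (Theorem 3.5.12 and Corollary 3.5.13), whose argument is the level-set/submersion one recalled below. Your linear-algebraic core is correct: the short exact sequence $0 \to E/(V_1\cap V_2) \to E/V_1 \oplus E/V_2 \to E/(V_1+V_2) \to 0$ together with $V_1+V_2=E$ does give $\operatorname{codim}(V_1\cap V_2)=k_1+k_2$, and your remark that finite codimension makes the closedness and splitting issues essentially free is also right.

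The gap is in the geometric step. Your plan rests on simultaneously linearizing $M_1$ and $M_2$ near $m$ via ``nested charts'': first straighten $M_1$ to $V_1$, then straighten the image of $M_2$ to $V_2$. The second straightening is a local diffeomorphism that has no reason to preserve $V_1$; after applying it, $M_1$ is again a curved submanifold merely tangent to $V_1$, and you are back where you started. Indeed, the existence of a single chart in which two transversal submanifolds are both linear is usually obtained as a \emph{corollary} of the transversal intersection theorem, so assuming it here is close to circular. The standard repair --- and the proof in the cited reference --- does not straighten $M_1$ at all: take a submanifold chart for $M_2$ alone, writing $M_2\cap U=g^{-1}(0)$ for a submersion $g:U\to\mathbb{R}^{k_2}$, and restrict $g$ to the manifold $M_1\cap U$. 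Hypothesis~\ref{itm:transversal} says exactly that $d(g|_{M_1})_m$ is surjective at every $m\in M_1\cap M_2$, and its kernel $T_mM_1\cap T_mM_2$ splits in $T_mM_1$ by hypothesis~\ref{itm:split} (or automatically, being closed of finite codimension), so the regular value theorem --- the same Theorem~\ref{thm:rvt} machinery used for Proposition~\ref{prop:one_w_smp} --- exhibits $M_1\cap M_2$ as a codimension-$k_2$ submanifold of $M_1$, hence of codimension $k_1+k_2$ in $M$. Your exact-sequence computation then confirms the count; the rest of your outline goes through once this replacement is made.
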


See~\cite[Corollary 3.5.13 and Theorem 3.5.12]{abraham2012manifolds} for a proof of Theorem~\ref{thm:splits}.  

In other words, if two submanifolds intersect transversally (see~\ref{itm:transversal}) and the intersection of their tangent spaces splits the ambient tangent space (see~\ref{itm:split}), then their intersection is itself a submanifold, and its codimension is the sum of their individual codimensions.

\begin{proof}[Proof of Theorem~\ref{thm:control_w_smp}]
Under assumption (A.\ref{ass:lpc}), by Proposition~\ref{prop:one_w_smp}, we know that $U(x^i,y^i)$ is a Banach submanifold of $\cV$ of finite codimension for $i = 1, \cdots, q$. Under assumptions (A.\ref{ass:mem}) and (A.\ref{ass:controllable}), there exists a control $u \in U(x^1,y^1) \cap U(x^2,y^2)$ for the model~\eqref{eqn:control_system}, and the sets of control functions $U(x^1,y^1)$ and $U(x^2,y^2)$ intersect transversally. Then, $T_{u} U(x^1,y^1)$ and $T_{u} U(x^2,y^2)$ satisfy~\ref{itm:transversal} in Theorem~\ref{thm:splits} for all $u \in U(x^1,y^1) \cap U(x^2,y^2)$. 

From Corollary~\ref{cor:tangent_kernel}, we interchangeably use $\mathrm{ker}(dG_{x^i}\lvert_u)$ and $T_uU(x^i,y^i)$. By Proposition~\ref{prop:one_w_smp}, we know that $\mathrm{ker}(dG_{x^i}\lvert_u)$ is closed and continuous for all $i \in \mathcal{I}$ and has finite codimension in $\cV$. Then, the intersection of $\mathrm{ker}(dG_{x^1}\lvert_u)$ and $\mathrm{ker}(dG_{x^2}\lvert_u)$ is also closed and continuous from~\cite{rudin1964principles}. Let $A := \mathrm{ker}(dG_{x^1}\lvert_u) \cap \mathrm{ker}(dG_{x^2}\lvert_u)$. 
Define the quotient map $\pi: T_u\cV \to T_u\cV/A$. Let $\{e_1, e_2, \ldots, e_\ell\}$ be a basis for $T_u\cV/A$ (note that we know that $\ell$ is finite and less than $2\bar{n}$). Pick $m_i \in T_u\cV$ so that $\pi(m_i) = e_i$ for $1 \leq i \leq \ell$, and let $N$ be the vector space spanned by $\{m_1, \cdots, m_\ell\}$~\cite[Lemma 4.21]{rudin1964principles}. Then,
\[
T_u\cV = N \oplus A.
\]
In words, $A = T_u U(x^1,y^1) \cap T_u U(x^2,y^2)$ splits $T_u\cV$ (see~\ref{itm:split}). Then, from Theorem~\ref{thm:splits}, we conclude that $U(x^1,y^1) \cap U(x^2,y^2)$ is a Banach submanifold of finite codimension. Iterating a finite number of times proves the result.\qed
\end{proof}

\section{Summary and Future Work}\label{sec:conl}

In this letter, we have taken a geometric approach to Tuning without Forgetting (TwF), an iterative training algorithm for neural ODEs, in which new training points are added sequentially, and parameters are updated within the subspace of control functions that preserve the end-point mapping at previously learned samples. This framework naturally extends to applications in continual learning, robust learning, and federated learning. We have shown that exact mapping preservation (i.e., not forgetting) during sequential training is guaranteed because the parameter subspace forms a Banach submanifold of finite codimension under nonsingular controls. This revealed the connection between TwF and nonsingular loops, a concept extensively studied in classical control theory.

For future work, we plan to extend our study to normalized flows induced by a neural ODE. In the current work, we had considered a finite set of training points; in the future, we aim to generalize $\mathcal{X}$ to a continuum or probability distribution, incorporating mapping-preserving operations over distributions by utilizing the null set of the input measure.

\vspace{-3mm}
\bibliographystyle{ieeetr}
\bibliography{learning.bib}

\end{document}